\pdfoutput=1

\RequirePackage[l2tabu, orthodox]{nag}
\documentclass[12pt]{article}
\PassOptionsToPackage{numbers, compress, sort}{natbib}
\usepackage[T1]{fontenc}
\usepackage{microtype}

\usepackage{amsmath}
\usepackage{amssymb}
\usepackage{amsthm}
\usepackage{mathtools}
\usepackage{thmtools}
\usepackage{mathrsfs}

\usepackage{libertine}      

\usepackage{titlesec}

\usepackage{hyperref}

\titleformat*{\section}{\Large\bfseries}

\usepackage[usenames,dvipsnames]{xcolor}
\definecolor{shadecolor}{gray}{0.9}
\usepackage{soul}

\usepackage[english]{babel}
\usepackage[parfill]{parskip}
\usepackage{afterpage}
\usepackage{framed}
\usepackage{nicefrac}

{\endMakeFramed}

\newcounter{parcount}

\usepackage{graphicx}
\usepackage[labelfont=bf]{caption}
\usepackage[format=hang]{subcaption}
\usepackage{wrapfig}

\usepackage{booktabs}
\usepackage{multirow}
\usepackage{multicol}
\usepackage{longtable}
\usepackage{booktabs}       
\usepackage{array}
\newcolumntype{C}[1]{>{\centering\arraybackslash}m{#1}}

\usepackage[sort]{natbib}
\usepackage{bibunits}

\usepackage{algorithm} 
\usepackage{algorithmic}
\newcommand{\INPUT}{\item[\textbf{Input:}]}

\usepackage{hyperref}
\hypersetup{colorlinks,linktoc=all}
\hypersetup{citecolor=Violet}
\hypersetup{linkcolor=black}
\hypersetup{urlcolor=MidnightBlue}

\usepackage[nameinlink]{cleveref}

\usepackage[acronym,nowarn]{glossaries}

\usepackage{listings}
\usepackage{lstbayes}
\lstdefinestyle{mystyle}{
    commentstyle=\color{OliveGreen},
    numberstyle=\tiny\color{black!60},
    stringstyle=\color{BrickRed},
    basicstyle=\ttfamily\scriptsize,
    breakatwhitespace=false,
    breaklines=true,
    captionpos=b,
    keepspaces=true,
    numbers=none,
    numbersep=5pt,
    showspaces=false,
    showstringspaces=false,
    showtabs=false,
    tabsize=2
}
\usepackage{enumitem}

\crefname{lemma}{lemma}{lemmas}
\Crefname{lemma}{Lemma}{Lemmas}
\crefname{theorem}{theorem}{theorems}
\Crefname{theorem}{Theorem}{Theorems}
\crefname{prop}{proposition}{propositions}
\Crefname{prop}{Proposition}{Propositions}

\newtheorem{thm}{Theorem} 
\newtheorem{defn}[thm]{Definition} 
\newtheorem{prop}[thm]{Proposition}

\usepackage[margin=1in]{geometry}
\usepackage{tcolorbox}
\usepackage{adjustbox}

\usepackage[defaultlines=3,all]{nowidow}

\usepackage{libertine}  
\usepackage{dsfont}

\usepackage{caption}
\usepackage{subcaption}
\usepackage{wrapfig}

\DeclareCaptionFormat{empty}{}

\everypar{\looseness=-1}
\title{Goal-Oriented Influence-Maximizing Data Acquisition for Learning and Optimization}

\author{
  Weichi Yao \\
  University of Michigan \\
  weichiy@umich.edu \\
  \and
  Bianca Dumitrascu \\
  Columbia University\\
  bmd2151@columbia.edu \\
  \and 
  Bryan R. Goldsmith \\
  University of Michigan  \\
  bgoldsm@umich.edu\\
  \and
  Yixin Wang \\
  University of Michigan  \\
  yixinw@umich.edu\\
}

\date{\today}

\begin{document}
\maketitle

\begin{abstract}

Active data acquisition is central to many learning and optimization tasks in deep neural networks, yet remains challenging because most approaches rely on predictive uncertainty estimates that are difficult to obtain reliably. To this end, we propose Goal-Oriented Influence-Maximizing Data Acquisition (GOIMDA), an active acquisition algorithm that 
avoids explicit posterior inference while remaining uncertainty-aware through inverse curvature. GOIMDA selects inputs by maximizing their expected influence on a user-specified goal functional, such as test loss, predictive entropy, or the value of an optimizer-recommended design. Leveraging first-order influence functions, we derive a tractable acquisition rule that combines the goal gradient, 
training-loss curvature, and candidate sensitivity to model parameters.
We show theoretically that, for generalized linear models, GOIMDA approximates predictive-entropy minimization up to a correction term accounting for goal alignment and prediction bias, thereby, yielding uncertainty-aware behavior without maintaining a Bayesian posterior. Empirically, across learning tasks (including image and text classification) and optimization tasks (including noisy global optimization benchmarks and neural-network hyperparameter tuning), GOIMDA consistently reaches target performance with substantially fewer labeled samples or function evaluations than uncertainty-based active learning and Gaussian-process Bayesian optimization baselines.

\end{abstract}

Keywords: Deep learning, Active acquisition, Bayesian optimization, Active learning, Influence function 
 
\section{Introduction}
\label{sec:intro}
Active data acquisition is central to many learning and optimization problems in science and engineering, where evaluations or labels are costly, and budgets are limited. A prominent example arises in materials science, where the goal is to discover materials with exceptional properties defined over a vast design space of elemental compositions and atomic structures \citep{Lookman2019ActiveLI,Pilania2021ActiveMS}. Evaluating a single candidate typically requires material synthesis followed by experimental testing, making each function evaluation expensive and time-consuming.

Similar challenges appear in biological system identification, where researchers seek to characterize the response of a system to external stimuli \citep{Sverchkov2017biologyAL}. In this setting, a predictive model—often a neural network—is trained on existing stimulus–response pairs and used to guide the selection of new stimuli that are expected to reduce prediction error. Each new response, however, must be obtained through complex laboratory experiments, making data acquisition the primary bottleneck.

A third example is hyperparameter optimization for deep neural networks \citep{Wu2019PracticalMB}. Hyperparameters govern both model architecture and training dynamics and have a large impact on performance. Evaluating a single configuration requires a full training and validation cycle, which is computationally expensive, especially for large-scale models.

Across these domains, the common challenge is to optimize a task-specific objective using as few evaluations as possible. A natural approach is iterative data acquisition: starting from an initial dataset $\mathcal{D}$ (which can be empty), one repeatedly fits a model $\mathcal{M}$ to the current data and selects the next input $x$ to evaluate by maximizing an acquisition function $a(x)$ that estimates the expected utility of labeling $x$.

When neural networks are used as the underlying model, most existing approaches, spanning Bayesian optimization and deep active learning, rely on predictive uncertainty to guide data selection. For example, in materials design and hyperparameter tuning, Bayesian optimization methods often use posterior predictive uncertainty to balance exploration and exploitation~\citep{Frazier2018ATO,Wang2023review}, while in stimulus--response learning, uncertainty-based criteria dominate active learning practice~\citep{Settles2009ActiveLL,Schrder2020surveyALforText,ren2021surveyDAL, Li2025surveyDALrecent}. Uncertainty estimates therein are typically obtained via Bayesian neural networks~\citep{li2024BNNBOreview} or ensemble-based methods \citep{Frazier2018ATO,Wang2023review,Settles2009ActiveLL,ren2021surveyDAL}.

Despite their popularity, reliable uncertainty estimation for deep neural networks remains a major challenge. Exact Bayesian inference is intractable, and approximate methods are often computationally expensive, sensitive to modeling choices, and poorly calibrated in high-dimensional regimes \citep{Springenberg2016BORobustBNN,li2024BNNBOreview,arbel2023primerBNN}. As a result, uncertainty-driven acquisition functions can be unreliable or impractical, limiting their effectiveness in modern deep learning pipelines.

\

\textbf{Main idea.} To address this challenge, we propose \emph{Goal-Oriented Influence-Maximizing Data Acquisition (GOIMDA)}, 
an iterative acquisition algorithm that 
is uncertainty-aware without explicit posterior inference.
Rather than quantifying predictive uncertainty, 
GOIMDA selects data points based on their expected \emph{influence} on a user-specified \textit{goal objective function} of the model parameters. 
The goal can represent diverse scientific objectives, such as test loss, predictive entropy, or the value of an optimizer-recommended design. In other words, GOIMDA maximizes a goal objective function while minimizing data acquisition efforts.

Using first-order influence functions, we derive a tractable acquisition rule that combines (i) the gradient of the goal objective,  
(ii) an inverse-curvature preconditioner given by the inverse Hessian of the empirical training loss, 
and (iii) each candidate's sensitivity to the model parameters. 
The goal gradient and candidate sensitivity promote goal-directed exploitation by favoring points whose induced parameter updates are most aligned with directions that improve the goal. At the same time, the inverse curvature term acts as a local uncertainty proxy that encourages exploration by prioritizing updates in directions the current data constrains the least. This yields an exploration–exploitation trade-off without maintaining a Bayesian posterior~\citep{efron2015frequentist,giordano2017covariances,basu1996local,alaa2020discriminative}.

More formally, let $(x,y)$ denote a feature--label pair with $x \in \mathcal{X}$ and $y \in \mathcal{Y}$, and let $p_0(y \mid x)$ be the unknown data-generating conditional distribution. We assume that sampling inputs $x$ is free, while observing labels $y$ is costly. Let $\mathcal{M}_\theta$ be a supervised model of $p_\theta(y \mid x)$ with parameters $\theta$, trained by minimizing the negative log-likelihood loss
\[
\ell\big(\theta; (x,y)\big) \;=\; -\,\log p_\theta(y \mid x).
\]

The goal objective function $\mathcal{G}$ can be flexibly defined for various scientific tasks that can be formulated as optimization problems. Without loss of generality, the following discussion focuses on minimizing $\mathcal{G}$. 
We consider the goal objective function $\mathcal{G}$ as a function of $\theta$, which parametrizes and reflects the best knowledge of the unknown distribution $p_0$. 
 
Goal-Oriented Influence-Maximizing Data Acquisition alternates between two steps:
\begin{enumerate}
    \item At each step, we fit a model $p_{\theta}(y\mid x)$ with model parameters $\theta$ by minimizing the empirical risk function on the existing data set $\mathcal{D}$
    \begin{align}
        \theta (\mathcal{D})\; =\;  \arg\min_{\theta}\frac{1}{\vert\mathcal{D}\vert}\sum_{(u,v)\in\mathcal{D}}\ell\big(\theta; (u,v)\big).
    \end{align}
    \item We choose to label the next data point that maximizes the expected influence $\mathcal{I}$ of any candidate $x$ on the goal objective function $\mathcal{G}$
    \begin{align}
        x_{\mathrm{next}}\; =\; \arg\max_{x} \,
        \mathbb E_{y}\big[
        \mathcal{I} \circ \mathcal{G}\big(\theta(\mathcal{D}\cup \{(x,y)\})\big)\big],
    \end{align}
    where the influence is measured by the instantaneous rate at which upweighting $x$ gives the maximum change in $\mathcal{G}$
    \begin{equation}
         \mathcal{I} \circ  \mathcal{G}\big(\theta(\mathcal{D}\cup \{(x,y)\})\big) \, := \, 
         s \cdot \frac{\partial}{\partial\epsilon}\mathcal{G}\big(\theta^{\epsilon}(\mathcal{D}\cup \{(x,y)\})\big)\Big\vert_{\epsilon=0},
         \label{eq:influence_function_on_goal}
    \end{equation}
    where 
    \begin{equation*}
        s\;=\;
         \begin{cases}
             +1,&\text{maximization of }\mathcal{G}\\
             -1,&\text{minimization of }\mathcal{G} \\
         \end{cases}.
    \end{equation*}

    In this formulation, $\theta^{\epsilon}(\mathcal{D}\cup \{(x,y)\})\big)$ is the parameter estimate after $x$ is added to the current dataset $\mathcal{D}$ with weight $\epsilon$, and the output of any candidate $x$ is approximated using resampling techniques such as Jackknife \citep{tukey1958bias}.
\end{enumerate}

\

\begin{figure}[t]
    \centering 
    \includegraphics[scale=0.57]{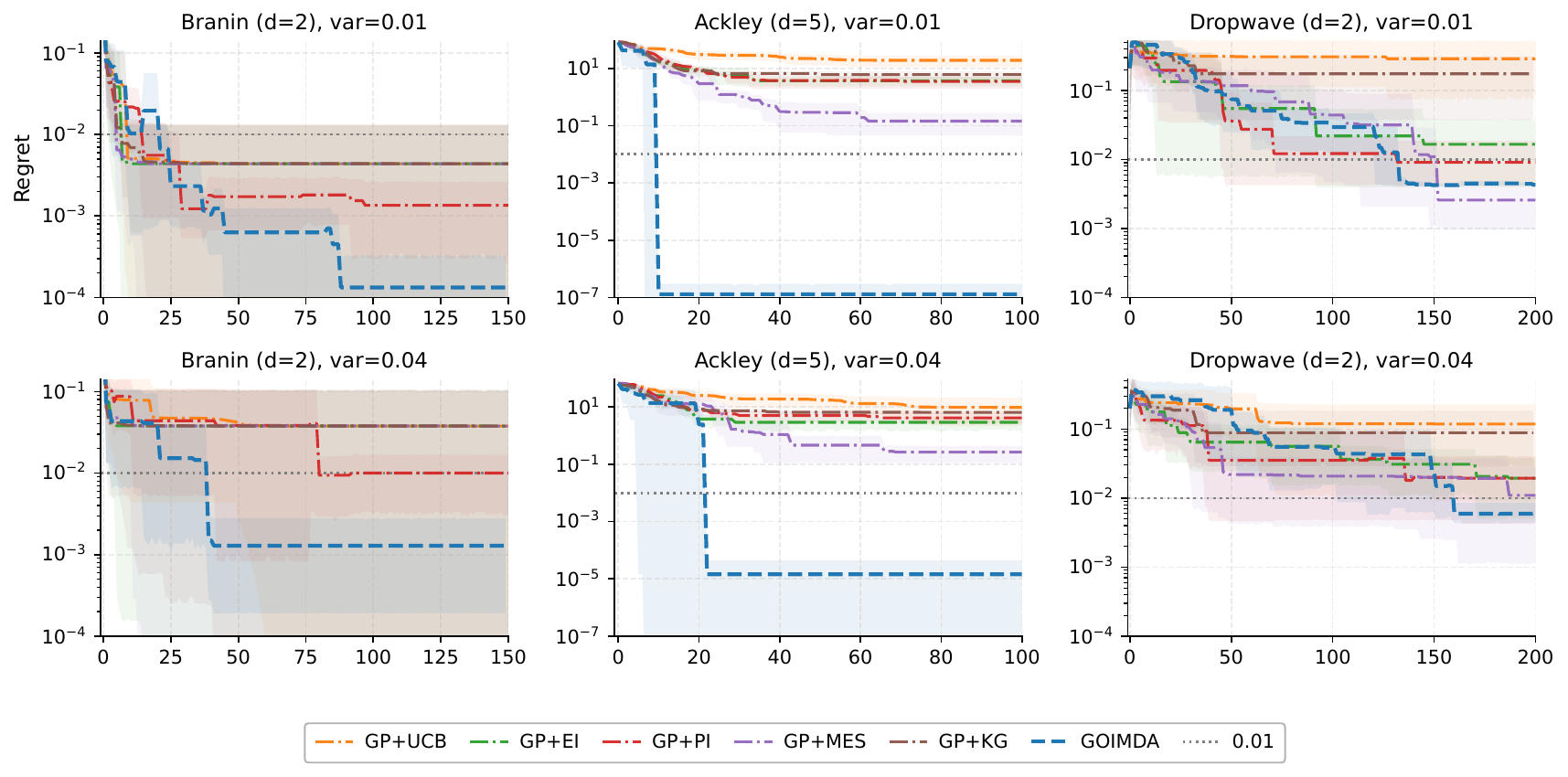}
    \caption{{\small
    \textbf{GOIMDA reaches lower immediate regret with fewer acquisitions than Bayesian optimization baselines on noisy objective functions.} 
    The Bayesian optimization baselines are Gaussian Processes-based with acquisition functions: upper confidence bound (GP+UCB), expected improvement (GP+EI), probability of improvement (GP+PI), max-value entropy search (GP+MES), and knowledge gradient (GP+KG).  
    Immediate regret is reported at each acquisition step for the Branin, Ackley, and Dropwave benchmarks under two noise levels ($\sigma^2=0.01,0.04$). 
    Solid/dashed curves show the mean performance across runs, and shaded regions denote bootstrapped 95\% confidence intervals of the mean (computed across runs). 
    Across all tasks, GOIMDA consistently achieves lower regret earlier in the acquisition process, with the advantage generally becoming more pronounced at higher noise levels.}}
    \label{fig:ao_noisy_function}
\end{figure}

\textbf{Contributions.} We introduce GOIMDA, a general active data acquisition framework that selects new queries by maximizing their expected first-order influence on a user-specified goal functional (e.g., test loss, predictive entropy, or the value of an optimizer-recommended design). We derive a tractable influence-based acquisition rule that couples the goal gradient, training-loss curvature (via inverse-Hessian–vector products), and candidate sensitivity.
We provide a scalable implementation using a Jackknife/deep-ensemble surrogate to approximate unknown labels and stochastic inverse-Hessian vector product solvers to handle modern neural networks.

We further develop theoretical results under exponential-family models, showing that GOIMDA approximates predictive-entropy minimization up to a correction term that accounts for goal alignment and prediction bias. This result explains how GOIMDA captures uncertainty-related behavior 
through the same curvature structure that underlies predictive entropy, while modulating it via goal alignment and prediction bias, but without maintaining a posterior over $\theta$.

Finally, we demonstrate empirically across a range of learning and optimization tasks, including image and text classification, noisy global optimization benchmarks, and neural-network hyperparameter tuning, that GOIMDA consistently achieves target performance with substantially fewer labeled samples or function evaluations than uncertainty-based active learning algorithms and Gaussian-process Bayesian optimization baselines. 
See Figure~\ref{fig:ao_noisy_function} for an example, where we benchmark GOIMDA on noisy black-box function optimization from \citep{bingham2015testfunc} with five other commonly used Gaussian Processes (GP)-based Bayesian optimization methods with different acquisition functions, namely, probability of improvement (PI) \citep{Torn1989PI}, expected improvement (EI) \citep{jones1998EI}, upper confidence bound (UCB) \citep{Srinivas2010UCB}, max-value entropy search (MES) \citep{wang2017MES}, and knowledge gradient (KG) \citep{scott2011KGgaussianmodel,frazier2008KLdiscrete,frazier2009KLcorrelatednormal}; more details provided in Section~\ref{sec:emipirical_studies} for empirical studies.

\textbf{Organization.} The rest of the paper is organized as follows: Section \ref{sec:method} presents the GOIMDA algorithm and its scalable implementation.
Section \ref{sec:exponential} develops theory under exponential-family models and connects GOIMDA to predictive-entropy-based acquisition.
Section \ref{sec:emipirical_studies} reports empirical results on learning and optimization benchmarks.
It concludes with a discussion of related work in Section \ref{sec:related}, along with the scope and limitations in Section \ref{sec:discussion}.


\section{Goal-Oriented Influence-Maximizing Data Acquisition}
\label{sec:method}
We introduce \emph{goal-oriented influence-maximizing data acquisition (GOIMDA)}. Unlike standard acquisition strategies that optimize proxy criteria (e.g., predictive uncertainty alone), GOIMDA allows the user to specify an explicit \emph{goal objective}
\[
\mathcal{G}(\theta(\mathcal{D})),
\]
which encodes the scientific task of interest and depends on the model parameters \(\theta(\mathcal{D})\) learned from the currently labeled dataset \(\mathcal{D}\). At each iteration, the objective is to acquire the data point whose label is expected to yield the largest improvement in this goal.

A naïve implementation would retrain the model for every candidate \(x\) to evaluate the post-update value of \(\mathcal{G}\), which is computationally infeasible. We therefore develop an \emph{influence-function-based approximation}. Using classical influence functions, we estimate the \emph{first-order change} in \(\mathcal{G}\) induced by infinitesimally upweighting a candidate example. This yields a tractable acquisition score that couples: (i) the goal gradient \(\nabla_\theta \mathcal{G}\), (ii) a curvature preconditioner given by the inverse Hessian of the empirical training loss, and (iii) the candidate's parameter sensitivity \(\nabla_\theta \ell(\theta; (x, y))\).

To handle unknown candidate labels and to scale to modern neural networks, we further introduce (a) a surrogate ensemble to approximate expectations over unknown labels and (b) scalable implicit inverse--Hessian--vector products, avoiding explicit construction of \(H_\theta^{-1}\). The remainder of this section specifies representative goal objectives \(\mathcal{G}\), derives the influence-function-based acquisition rule, and presents practical algorithms for efficient implementation.

\subsection{Goal Objective Function}

We consider the goal objective \(\mathcal{G}\) defined as a function of the current fitted parameters \(\theta := \theta(\mathcal{D})\), which summarize the most up-to-date model induced by the acquired dataset \(\mathcal{D}\). This formulation allows \(\mathcal{G}\) to be flexibly instantiated to match different scientific tasks.

\textbf{Global optimization.}
Consider minimizing an unknown function \(f\), whose observations take the form \(y = f(x) + \epsilon\). This setting covers applications such as materials design and neural-network hyperparameter tuning. Let \(x^\ast = \arg\min_x f(x)\) denote the true minimizer, which is unknown. We define the goal objective $\mathcal{G}$ as 
\begin{align}
    \mathcal{G}^{\text{opt}}(\theta)
    \;:=\; \mathbb{E}_{y \sim p_0(\cdot \mid \hat{x}^\ast_\theta)}[\, y \,],
    \label{eq:goal_opt}
\end{align}
the expected property value at the model's recommended minimizer  $\hat{x}^\ast_\theta$, obtained by minimizing model's predictive mean $\mathbb E_{y\sim p_\theta(\cdot\mid x)}[\,y\,]$.

\textbf{Targeted supervised learning.}
For supervised learning problems where performance is evaluated on an unlabeled target set \(\mathcal{U}\) that shares the same conditional distribution \(p_0(y \mid x)\), we define \(\mathcal{G}\) as a utility function over \(\mathcal{U}\). A common choice is the negative log-likelihood:
\begin{align}
    \mathcal{G}^{\text{nll}}(\theta)
    \;:=\; -\sum_{x \in \mathcal{U}} \mathbb{E}_{y \sim p_0(\cdot \mid x)}
    [\log p_\theta(y \mid x)].
    \label{eq:goal_nll}
\end{align}
When the objective places greater importance on reducing errors for hard or low-confidence predictions, as opposed to uniformly penalizing all examples, we adopt the focal loss \citep{lin2020focalloss}:
\begin{align}
    \mathcal{G}^{\text{foc}}(\theta)
    \;:=\; -\sum_{x \in \mathcal{U}} \mathbb{E}_{y \sim p_0(\cdot \mid x)}
    \big[(1 - p_\theta(y \mid x))^\gamma \log p_\theta(y \mid x)\big],
    \label{eq:goal_focal}
\end{align}
where \(\gamma\) is a predefined focused parameter or the relaxation parameter.

\textbf{Entropy-based objectives.}
If the goal is to reduce predictive uncertainty on \(\mathcal{U}\), we use the Shannon entropy of the model's predictive distribution:
\begin{align}
    \mathcal{G}^{\text{ent}}(\theta)
    \;:=\; -\sum_{x \in \mathcal{U}}
    \mathbb{E}_{y \sim p_\theta(\cdot \mid x)}
    [\log p_\theta(y \mid x)].
    \label{eq:goal_ent}
\end{align}
Although \(\mathcal{G}^{\text{ent}}\) and \(\mathcal{G}^{\text{nll}}\) share the same functional form, they differ in the underlying expectation: the former is taken with respect to the model distribution \(p_\theta\), while the latter is evaluated under the true conditional distribution \(p_0\).

Overall, these examples illustrate that the goal objective \(\mathcal{G}\) can be adapted to a wide range of tasks. Without loss of generality, we focus on settings where the objective is to \emph{minimize} \(\mathcal{G}\).

\subsection{Influence Function}

\subsubsection{Goal-based informativeness}

Given a goal minimization objective \(\mathcal{G}\), we call a candidate input \(x_{\mathrm{c}}\) \emph{informative} if acquiring its label is expected to affect the model update and hence reduce \(\mathcal{G}\) the most. 

Intuitively, a candidate data point is informative if its inclusion would substantially alter the learned model parameters and, as a result, lead to a significant reduction in the goal objective. This notion is formalized below.

\begin{defn}[Informativeness]\label{def:informativeness}
Given a goal objective \(\mathcal{G}\) and a current dataset \(\mathcal{D}\), the informativeness of a candidate input \(x_{\mathrm{c}}\) is measured by the change in the goal objective after adding \(x_{\mathrm{c}}\) to \(\mathcal{D}\):
\begin{align}
    \Delta \mathcal{G} 
    \;:=\; \mathcal{G}\big(\theta(\mathcal{D} \cup \{(x_{\mathrm{c}}, y_{\mathrm{c}})\})\big) 
    - \mathcal{G}(\theta(\mathcal{D})).
    \label{eq:reduction_in_goal}
\end{align}
Here, \(\theta(\mathcal{D})\) and \(\theta(\mathcal{D} \cup \{(x_{\mathrm{c}}, y_{\mathrm{c}})\})\) denote the parameters before and after acquiring \(x_{\mathrm{c}}\), respectively. 
\end{defn}

Although this definition is conceptually clear, directly evaluating \(\Delta \mathcal{G}\) is computationally infeasible, as it requires retraining the model for every candidate \(x_{\mathrm{c}}\) and every possible label \(y_{\mathrm{c}}\). In the following, we introduce an influence-function-based approximation \citep{huber1972influence} that enables efficient estimation of informativeness without retraining the model.

\subsubsection{Influence function approximation}
We define the influence of a candidate point \((x_{\mathrm{c}}, y_{\mathrm{c}})\) on the goal objective \(\mathcal{G}\) as the instantaneous change in \(\mathcal{G}\) induced by infinitesimally upweighting this point in the training objective; see (\ref{eq:influence_function_on_goal}). In the case of minimizing the goal objective $\mathcal{G}$, the influence defined takes the instantaneous rate with a negative sign
\begin{equation}
    \mathcal{I} \circ \mathcal{G}\big(\theta(\mathcal{D}\cup \{(x_{\mathrm{c}},y_{\mathrm{c}})\})\big)\;=\;-\,\frac{\partial}{\partial\epsilon} \mathcal{G}\big(\theta^{\epsilon}(\mathcal{D}\cup \{(x_{\mathrm{c}},y_{\mathrm{c}})\})\big)  \big\vert_{\epsilon=0},
    \label{eq:influence_function_on_goal_minimization}
\end{equation}
where $\theta^{\epsilon}(\mathcal{D}\cup \{(x_{\mathrm{c}},y_{\mathrm{c}})\})$ is the parameter estimate when $x_{\mathrm{c}}$ is added to $\mathcal{D}$ with weight $\epsilon$:
\begin{align}
    \theta^{\epsilon}(\mathcal{D}\cup \{(x_{\mathrm{c}},y_{\mathrm{c}})\})
    \;=\; \arg\min_{\theta} \frac{1}{|\mathcal{D}|}\sum_{(u,v)\in\mathcal{D}}\ell\big(\theta; (u, v)\big) + \epsilon\,\ell\big(\theta; (x_{\mathrm{c}}, y_{\mathrm{c}})\big).
    \label{eq:theta_epsilon}
\end{align}

\textbf{Maximizing the influence score for maximizing informativeness. } The following result shows that maximizing the reduction in $\mathcal{G}$ when candidate $x_{\text{c}}$ is added in the training set $\mathcal{D}$ can be linearly approximated by maximizing $\mathcal I \circ\mathcal{G} (\theta(\mathcal{D}\cup \{(x_{\text{c}},y_{\text{c}})\}))$ without retraining the model.

\begin{prop}\label{prop:minIequivmaxRh}
    The influence score of a candidate point $(x_{\mathrm{c}}, y_{\mathrm{c}})$ on the goal minimization objective $\mathcal{G}$, defined in (\ref{eq:influence_function_on_goal_minimization}), approximates the reduction in $\mathcal{G}$ after adding $x_{\mathrm c}$ to $\mathcal{D}$. 
\end{prop}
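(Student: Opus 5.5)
We argue by a first-order Taylor expansion in the upweighting parameter $\epsilon$, treating the acquisition of $(x_{\mathrm c},y_{\mathrm c})$ as a finite perturbation of the weight in (\ref{eq:theta_epsilon}). Write $\hat\theta=\theta(\mathcal{D})$ for the minimizer of the empirical risk $R(\theta)=\frac{1}{|\mathcal{D}|}\sum_{(u,v)\in\mathcal{D}}\ell(\theta;(u,v))$. We assume $R$ is twice continuously differentiable with positive definite Hessian $H_{\hat\theta}=\nabla^2_\theta R(\hat\theta)$, that $\ell(\cdot;(x_{\mathrm c},y_{\mathrm c}))$ is $C^2$ near $\hat\theta$, and that $\mathcal{G}$ is differentiable at $\hat\theta$.

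Adding the new point to $\mathcal{D}$ and refitting with the normalized risk $\frac{1}{|\mathcal{D}|+1}\sum$ has the same minimizer as (\ref{eq:theta_epsilon}) with $\epsilon_0=1/|\mathcal{D}|$. Therefore $\theta(\mathcal{D}\cup\{(x_{\mathrm c},y_{\mathrm c})\})=\theta^{\epsilon_0}$ and $\theta(\mathcal{D})=\theta^{0}$, so that
\[
\Delta\mathcal{G}=\mathcal{G}(\theta^{\epsilon_0})-\mathcal{G}(\theta^{0}).
\]

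\textbf{Step 1: derivative of the refitted parameters.} The map $\epsilon\mapsto\theta^\epsilon$ is characterized by the first-order condition $\nabla R(\theta^\epsilon)+\epsilon\nabla_\theta\ell(\theta^\epsilon;(x_{\mathrm c},y_{\mathrm c}))=0$. Because $H_{\hat\theta}$ is invertible, the implicit function theorem makes $\theta^\epsilon$ continuously differentiable near $\epsilon=0$. Differentiating the condition at $\epsilon=0$ gives the classical influence formula
\[
\frac{d\theta^\epsilon}{d\epsilon}\Big|_{\epsilon=0}=-H_{\hat\theta}^{-1}\nabla_\theta\ell(\hat\theta;(x_{\mathrm c},y_{\mathrm c})).
\]

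\textbf{Step 2: derivative of the goal.} By the chain rule,
\[
\frac{\partial}{\partial\epsilon}\mathcal{G}(\theta^\epsilon)\Big|_{0}=-\nabla_\theta\mathcal{G}(\hat\theta)^\top H_{\hat\theta}^{-1}\nabla_\theta\ell(\hat\theta;(x_{\mathrm c},y_{\mathrm c})).
\]
Comparing with (\ref{eq:influence_function_on_goal_minimization}), this quantity equals $-\mathcal{I}\circ\mathcal{G}$.

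\textbf{Step 3: Taylor expansion.} The function $g(\epsilon)=\mathcal{G}(\theta^\epsilon)$ is differentiable at $0$, so
\[
\Delta\mathcal{G}=g(\epsilon_0)-g(0)=\epsilon_0\,g'(0)+o(\epsilon_0)=-\frac{1}{|\mathcal{D}|}\,\mathcal{I}\circ\mathcal{G}\big(\theta(\mathcal{D}\cup\{(x_{\mathrm c},y_{\mathrm c})\})\big)+o\big(|\mathcal{D}|^{-1}\big).
\]
The reduction $-\Delta\mathcal{G}$ is therefore, to first order, a positive multiple $1/|\mathcal{D}|$ of the influence score. This constant does not depend on the candidate, so ranking candidates by $\mathcal{I}\circ\mathcal{G}$ agrees to first order with ranking them by the reduction in $\mathcal{G}$. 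The same holds after taking $\mathbb{E}_{y_{\mathrm c}}$, by linearity of expectation, provided the remainder is uniformly integrable in $y_{\mathrm c}$.

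\textbf{Main obstacle: the remainder.} The delicate step is controlling the $o(\epsilon_0)$ term, since $\epsilon_0=1/|\mathcal{D}|$ is a finite step and not infinitesimal. If $\mathcal{G}$ is additionally $C^2$, we bound $\sup_{\epsilon\le\epsilon_0}|g''(\epsilon)|$. Differentiating the first-order condition a second time expresses $d^2\theta^\epsilon/d\epsilon^2$ in terms of third derivatives of the loss and the inverse Hessian along the path. This gives $|\Delta\mathcal{G}+\epsilon_0\,\mathcal{I}\circ\mathcal{G}|=O(\epsilon_0^2)$, provided the smallest eigenvalue of $\nabla^2 R+\epsilon\nabla^2\ell$ stays bounded below on $[0,\epsilon_0]$.

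For deep networks this uniform curvature condition may fail. In that case the statement is best read as the local, first-order approximation. That is the sense in which we will state the result: the approximation holds whenever the refitted parameters stay in the neighborhood of $\hat\theta$ where the implicit-function expansion is valid.
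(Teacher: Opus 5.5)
Your proposal is correct and follows the same route as the paper: a first-order Taylor expansion of $\mathcal{G}(\theta^{\epsilon})$ at $\epsilon=0$, evaluated at $\epsilon=1/|\mathcal{D}|$, which gives $-\Delta\mathcal{G}\approx \frac{1}{|\mathcal{D}|}\,\mathcal{I}\circ\mathcal{G}$. You are more careful than the paper, which leaves the identification $\theta(\mathcal{D}\cup\{(x_{\mathrm c},y_{\mathrm c})\})=\theta^{1/|\mathcal{D}|}$ implicit, writes the remainder as $\mathrm{o}(\epsilon^2)$ instead of $\mathrm{o}(\epsilon)$, and does not address either the differentiability of $\epsilon\mapsto\theta^{\epsilon}$ or the fact that the step $1/|\mathcal{D}|$ is finite.
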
 
\begin{proof}
    By definition of $\Delta \mathcal{G}$ in (\ref{eq:reduction_in_goal}), $\Delta \mathcal{G}$ is negative for minimization of the goal objective $\mathcal{G}$. 
    To maximize the reduction in $\mathcal G$, we maximize $-\Delta \mathcal{G}$. 
    Since applying the first-order Taylor expansion on the goal objective function $\mathcal{G}\big(\theta^\epsilon(\mathcal{D}\cup\{(x_{\text{c}}, y_{\text{c}})\})\big)$ at $\epsilon = 0$ gives 
    \begin{align}
        \mathcal{G}\big(\theta^\epsilon(\mathcal{D}\cup\{(x_{\text{c}}, y_{\text{c}})\})\big)\;=\; \mathcal{G}(\theta(\mathcal{D}))  +\Big[ \frac{\partial }{\partial \epsilon} \mathcal{G}\big(\theta^\epsilon(\mathcal{D}\cup\{(x_{\text{c}}, y_{\text{c}})\}\big)\Big]\Big\vert_{\epsilon=0} \cdot\epsilon + \mathrm{o}(\epsilon^2),  
    \end{align}
    we have 
    \begin{align}
         - \,\Delta \mathcal{G}  \;\approx\; -
         \,\frac{1}{\vert \mathcal{D}\vert }\Big[ \frac{\partial }{\partial \epsilon} \mathcal{G}\big(\theta^\epsilon(\mathcal{D}\cup\{(x_{\text{c}}, y_{\text{c}})\}\big)\Big]\Big\vert_{\epsilon=0}  \;\stackrel{(\ref{eq:influence_function_on_goal_minimization})}{=}\;\frac{1}{\vert \mathcal{D}\vert } \;\mathcal{I} \circ \mathcal{G}\big(\theta(\mathcal{D}\cup \{(x_{\text{c}},y_{\text{c}})\})\big)
         .  
    \end{align}
    Therefore, we can linearly approximate the reduction in test loss due to adding $x_\mathrm{c}$ without retraining the model by computing $\frac{1}{\vert \mathcal{D}\vert }\,\mathcal{I}\circ \mathcal{G}$. 
\end{proof}

Under a minimization objective, the most informative data point is the one that induces the greatest reduction in the goal objective. Proposition~\ref{prop:minIequivmaxRh} shows that the reduction can be linearly approximated via the influence score. As a result, selecting the most informative data point reduces to choosing the candidate with the maximum influence \citep{koh2017understanding}. 
In this precise sense, the influence-score-based acquisition criterion is well justified: it corresponds to the greedy choice that most effectively decreases \(\mathcal{G}\) under a first-order influence approximation.

\subsubsection{Closed-form influence score}

We now derive a closed-form expression for the influence score, which yields a tractable acquisition criterion. The explicit form of (\ref{eq:influence_function_on_goal_minimization}) can be derived by first applying the chain rule:
\begin{align}
    \mathcal{I} \circ \mathcal{G}\big(\theta(\mathcal{D}\cup \{(x_{\text{c}},y_{\text{c}})\})\big)\;=\;-\,\Big[\nabla_{\theta}\,\mathcal{G}\big(\theta(\mathcal{D})\big)\Big]^\top\cdot\Big[\frac{\partial}{\partial\epsilon}\theta^{\epsilon}\big(\mathcal{D}\cup(\{x_{\text{c}},y_{\text{c}}\})\big)\Big]\Big\vert_{\epsilon=0},
    \label{eq:influence_loss_params}
\end{align}

To evaluate the parameter sensitivity term, we apply standard influence-function arguments. Using the first-order optimality condition of $\theta^{\epsilon}(\mathcal{D}\cup \{(x_{\text{c}},y_{\text{c}})\})$ and the fact that $\theta^\epsilon(\mathcal{D}\cup \{(x_{\text{c}},y_{\text{c}})\})\rightarrow \theta (\mathcal{D})$ as $\epsilon\rightarrow 0$ \citep{Influence_book}, a classical Taylor expansion on $\frac{\partial }{\partial \epsilon}\theta^{\epsilon}(\mathcal{D}\cup \{(x_{\text{c}},y_{\text{c}})\})$ in (\ref{eq:influence_loss_params}) at $\epsilon=0$ gives
\begin{align}
    \frac{\partial}{\partial\epsilon}\theta^{\epsilon}\big(\mathcal{D}\cup(\{x_{\text{c}},y_{\text{c}}\})\big)\Big\vert_{\epsilon=0}\;\approx\; -\,H_{\theta}^{-1}\,\Big[\nabla_{\theta}\, \ell\big( \theta;(x_{\text{c}},y_{\text{c}})\big) \Big]\Big\vert_{\theta=\theta(\mathcal{D})},
    \label{eq:influence_params}
\end{align}
where $H_{ \theta }$ is the Hessian matrix
\begin{align}
    H_{ \theta }\;:=\;\frac{1}{\vert \mathcal{D}\vert }\sum_{(u,v)\in \mathcal{D}}\nabla_\theta^2
    \,\ell\big( \theta; (u,v)\big).
    \label{eq:hessian}
\end{align}

Substituting~\eqref{eq:influence_params} into~\eqref{eq:influence_loss_params} yields a closed-form approximation $\tilde{\mathcal{I}}\circ \mathcal{G}\big( \theta(\mathcal{D}\cup\{(x_{\text{c}},y_{\text{c}})\})\big)$ for (\ref{eq:influence_loss_params}).
Since the candidate label \(y_{\mathrm{c}}\) is unknown at acquisition time, we take expectation with respect to the true conditional distribution \(p_0(y \mid x_{\mathrm{c}})\), obtaining the expected influence score
\begin{equation}
    \begin{aligned}
         \mathbb E_{y_{\text{c}}\sim p_0(\cdot\mid x_{\text{c}})}\big[\tilde{\mathcal{I}} \circ \mathcal{G}\big( \theta(\mathcal{D}\cup\{(x_{\text{c}},y_{\text{c}})\})\big)\big] \;=\;\Big[\nabla_{\theta}\,\mathcal{G}\big(\theta(\mathcal{D})\big)\Big]^\top H_{\theta}^{-1}\,\Big[\nabla_{\theta}\mathbb E_{y_{\text{c}}\sim p_0(\cdot\mid x_{\text{c}})}\ell\big( \theta;(x_{\text{c}},y_{\text{c}})\big)\Big]\Big\vert_{\theta=\theta(\mathcal{D})}.
    \end{aligned}\label{eq:closed_form_I}
\end{equation}

Equation~\eqref{eq:closed_form_I} provides an explicit, first-order approximation to the influence of a candidate point on the goal objective \(\mathcal{G}\), without requiring access to the retrained parameters
\(\theta(\mathcal{D} \cup \{(x_{\mathrm{c}}, y_{\mathrm{c}})\})\).
Consequently, the most informative candidate can be selected by maximizing~\eqref{eq:closed_form_I}. This yields an efficient goal-oriented data acquisition rule that directly targets reductions in \(\mathcal{G}\) while avoiding the computational cost of retraining the model for every candidate.

\subsection{Practical and Scalable Implementation}

By definition, the influence score in~(\ref{eq:influence_function_on_goal_minimization}) with a closed-form expression~(\ref{eq:closed_form_I}) traces how the optimization of the goal objective
$\mathcal{G}$ propagates through the learned parameters $\theta$ and back to individual
training data points, thereby identifying the point with the greatest potential to reduce
$\mathcal{G}$. Proposition~\ref{prop:minIequivmaxRh} further shows that maximizing the
influence score provides an efficient approximation to maximizing
the informativeness in terms of reduction in $\mathcal{G}$.

However, computing the closed-form influence score in~(\ref{eq:closed_form_I}) is nontrivial.
First, evaluating the influence of any candidate input on the goal
objective typically requires expectations over the true conditional distribution $p_0(y\mid x)$,
which is unavailable in practice. Second, as the number of acquired data points
$\left\vert \mathcal{D} \right\vert$ grows, directly computing the inverse Hessian
$H_\theta^{-1}$ defined in~(\ref{eq:hessian}) becomes prohibitively expensive, making
naive influence-based acquisition infeasible. In this section, we introduce practical
and scalable algorithms that address both challenges and enable efficient computation
of the influence function in~(\ref{eq:closed_form_I}).

\textbf{Approximating the unknown output variable.} Goal objectives $\mathcal{G}(\theta)$ are often defined on unknown target outputs, such
as~(\ref{eq:goal_opt}) for iterative global optimization and~(\ref{eq:goal_nll}) for active
learning. At the same time, the influence score is
evaluated at $\theta(\mathcal{D} \cup \{(x_{\mathrm{c}}, y_{\mathrm{c}})\})$, where the
candidate output $y_{\mathrm{c}}$ is unknown prior to acquisition. Consequently,
computing the influence score of any candidate $x_{\mathrm{c}}$ requires an approximation of the goal objective $\mathcal{G}$ with potentially unknown target outputs and 
the corresponding output $y_{\mathrm{c}}$.

Most existing work that applies influence functions to iterative data acquisition
operates within an active learning framework
\citep{xu2019understandinggoalorientedactivelearning,wang2022boostingactivelearningimproving}.
One line of work selects goal objectives, such as negative prediction entropy
(\ref{eq:goal_ent}) or variants of the Fisher Information Ratio \citep{Zhang2000FIR}, which
depend only on the current model $p_\theta$ and do not explicitly target the test
performance under the true data-generating distribution
\citep{xu2019understandinggoalorientedactivelearning}. Another approach derives
acquisition functions from upper bounds on the test loss, leading to criteria based on
the gradient norm
$\lVert \nabla_\theta \mathbb{E}_{y_{\mathrm{c}}\sim p_\theta(\cdot\mid x_{\mathrm{c}})}
[\ell (\theta; (x_{\mathrm{c}}, y_{\mathrm{c}}) )] \rVert$
\citep{wang2022boostingactivelearningimproving}. In both cases, the unknown candidate outputs
$y_{\mathrm{c}}$ are approximated using the current model posterior $p_\theta$, relying
solely on training information.

In contrast, our objective is to directly optimize goal functions defined under the
unknown true distribution $p_0$, such as test loss in active learning or the minimal
value in iterative global optimization. To this end, we approximate the unknown output
without relying on the posterior of the primary model $p_\theta$.

Specifically, we introduce a surrogate model $\tilde{\mathcal{M}}_\phi$ with model parameter $\phi$, implemented as
an ensemble of $r$ neural networks with different random initializations and trained on
different subsets of the training data. This approach is closely related to deep
ensembles, which have been shown to improve predictive accuracy, uncertainty estimation,
and robustness to distributional shift \citep{fort2020deepensembleslosslandscape}. To
further enhance predictive stability, we employ the Jackknife resampling technique
\citep{tukey1958bias}. Each ensemble member is trained on a Jackknife subsample of the
available data, and the surrogate model outputs the average prediction across all $r$
networks. The resulting Jackknife estimation has computational complexity $O(rm)$,
where $m$ is the number of model parameters.


\textbf{Computing the inverse Hessian--vector product.} Evaluating (\ref{eq:closed_form_I}) requires
computing the inverse of the Hessian matrix defined in~(\ref{eq:hessian}). For a model
$\mathcal{M}$ with $m$ trainable parameters and a dataset $\mathcal{D}$ of size $n$,
explicitly forming and inverting the Hessian incurs a computational cost of
$O(nm^2 + m^3)$, which is infeasible for modern neural networks with millions of
parameters.

To avoid explicit Hessian inversion, we approximate inverse Hessian--vector products
(HVPs), which can be computed in linear time with respect to the number of parameters
using automatic differentiation frameworks such as \texttt{PyTorch} and \texttt{JAX}.
A common approach is to use conjugate gradient (CG) methods \citep{Mehra1969cg_ihvp},
which solve
\[
\min_u \; u^\top (H^2 + \lambda I) u - (Hv)^\top u,
\]
where the solution $u^\ast$ approximates $H^{-1}v$. With $L$ CG iterations, the
computational cost is $O(nL)$, and in practice, convergence can often be achieved with a
small number of iterations \citep{Martens2010cg_speedup}.

When the training dataset is very large, the linear dependence on $n$ can still be
costly. Alternatively, we adopt the LiSSA algorithm \citep{agarwal2017second}, which
recursively estimates the inverse HVPs via
\[
\hat{H}_j^{-1} v \;=\; v + (I - H)\hat{H}_{j-1}^{-1} v,
\qquad
\hat{H}_0^{-1} v \;=\; v.
\]
LiSSA stochastically approximates inverse HVPs using mini-batches of data at each
iteration. With batch size $B$ and $L$ iterations, the computational complexity is
$O(nm + BLm)$. Both CG and LiSSA are well-suited to large-scale settings with limited
memory, and empirically, we observe no significant difference between them in terms of
acquisition performance.

\textbf{Goal-Oriented Influence-Maximizing Data Acquisition (GOIMDA): A general efficient iterative acquisition algorithm.} In practice, we select the next acquisition point by
\begin{align}
    x_{\mathrm{next}}\;=\; \arg\max_{x_{\text{c}}}\;
    \mathbb E_{y_{\text{c}}\sim p_{\phi}(\cdot\mid x_{\text{c}})}\big[\tilde{\mathcal{I}} \circ \mathcal{G}_{\phi}\big(\theta(\mathcal{D}\cup \{(x_{\text{c}}, y_{\text{c}})\})\big)\big],
    \label{eq:x_next} 
\end{align}
where the influence term admits the explicit approximation
\begin{align}
    \tilde{\mathcal{I}} \circ  \mathcal{G}_{\phi}\big(\theta(\mathcal{D}\cup \{(x_{\text{c}},y_{\text{c}})\})\big)\;=\;\Big[\nabla_{\theta}\,\mathcal{G}_{\phi}\big(\theta(\mathcal{D})\big)\Big]^\top \hat{H}_{\theta}^{-1}\,\Big[\nabla_{\theta}\,\ell\big( \theta;(x_{\text{c}}, y_{\text{c}})\big)\Big]\Big\vert_{\theta=\theta(\mathcal{D})}. 
    \label{eq:closed_form_I_practice}
\end{align}
Here, $\mathcal{G}_{\phi}$ denotes the goal objective approximated using the surrogate model
$\tilde{\mathcal{M}}_\phi$ with model parameter $\phi$ in the absence of direct access to $p_0$, and
$\hat{H}_\theta^{-1}$ denotes a stochastic approximation of the inverse Hessian obtained
via implicit HVPs. See 
Algorithm~\ref{algo:GOI_general} for the complete algorithm.

\begin{algorithm}[t]
\caption{Goal-Oriented Influence-Maximizing Data Acquisition} 
\begin{algorithmic}[1]
\INPUT The initial data set $\mathcal{D}$
\REPEAT 
\STATE Update both main model $\mathcal{M}_\theta$ and the ensemble model $\tilde{\mathcal{M}}_{\phi}$ on $\mathcal{D}$  
\STATE Select $x_{\mathrm{next}}\gets\arg\max_{x}\;\mathbb E_{y\sim p_\phi(\cdot\mid x)}\big[\tilde{\mathcal{I}} \circ \mathcal{G}_{\phi}\big(\theta(\mathcal{D}\cup \{(x,y)\})\big)\big]$ \hfill\COMMENT{see approximation in (\ref{eq:closed_form_I_practice})}
\STATE Query $y_{\mathrm{next}} \gets \texttt{O{\small BSERVE}}(x_{\mathrm{next}})$  
\STATE Augment $\mathcal{D} \gets \mathcal{D}\cup \{(x_{\mathrm{next}},y_{\mathrm{next}})\}$ 
\UNTIL{termination condition is met} \hfill \COMMENT{e.g. budget exhausted or desired goal achieved}
\end{algorithmic} \label{algo:GOI_general}
\end{algorithm}

\section{Theoretical Properties of GOIMDA under Exponential Family Models} 
\label{sec:exponential}

In this section, we study goal-oriented influence-maximizing data acquisition (GOIMDA)
under exponential family models. We first formalize the exponential-family setting and
derive GOIMDA in closed form. We then provide a geometric interpretation of the resulting
influence function, clarify the role of the bias term, and compare GOIMDA with
predictive-entropy–based acquisition.

Our study reveals that GOIMDA decomposes naturally into three interacting components:
(i) \emph{goal alignment}, captured by the gradient $\nabla_\theta \mathcal{G}$;
(ii) \emph{curvature preconditioning}, governed by the inverse Hessian $H_\theta^{-1}$ of the \emph{empirical training loss}; and
(iii) a \emph{prediction-bias term} that quantifies the discrepancy between the model and
the true data-generating mechanism. Together, these components define an acquisition
criterion that is \emph{uncertainty-aware in a local,
curvature-based sense}: the inverse-curvature term prioritizes candidates that induce updates along
parameter directions that are not yet well constrained by the current data.

For canonical Generalized Linear Models (GLMs), we make this connection explicit by showing that GOIMDA inherits the same $H_\theta^{-1}$ leverage factor that appears in predictive-entropy acquisition, while reweighted by goal alignment and prediction bias.
Moreover, we derive a parameter-space surrogate for the bias term that replaces the unknown true conditional distribution with an estimable parameter discrepancy, yielding a fully tractable acquisition rule.

Overall, these results position GOIMDA as an \emph{exploration-aware, bias-focused
improvement surrogate}: it targets regions that are both influential for the goal and
informative about model misspecification, without requiring a posterior distribution
over~$\theta$.

\textbf{Model setup.} Let $x \in \mathbb{R}^d$ and $y \in \mathbb{R}$. We model the conditional distribution
$y \sim p_\theta(\cdot \mid x)$ using an exponential family of the form
\begin{align}
    p_\theta(y\mid x) \;=\; h(y)\exp\Big (\eta_\theta(x)\, T(y)-A\big(\eta_\theta(x)\big)\Big),
    \label{eq:exponential_family_main}
\end{align}
where $T(y)$ is the sufficient statistic and $\eta_\theta(x)$ is the natural parameter,
parameterized by $\theta$.\footnote{We do not explicitly distinguish between scalar and
vector-valued parameters in the notation.} Exponential family distributions enjoy
several well-known statistical properties:
\begin{enumerate}[label=(\ref{sec:exponential}.\arabic*)]
    \item the log-partition function $A(\eta\big)$ is convex;
    \item $\mathbb{E}_\eta[T(y)] = A^\prime(\eta)$;
    \item $\mathrm{var}_{\eta}[T(y)] = A^{\prime\prime}(\eta)$.
\end{enumerate}

In this setting, the natural parameter $\eta_\theta(x)$ is modeled by a deep neural
network with parameters $\theta$, trained via the negative log-likelihood loss $\ell (\theta; (x,y)):=-\log p_\theta (y\mid x)$, explicitly,
\begin{align}
    \ell\big(\theta;(x,y)\big)
    \;=\;-\log h(y) -\eta_\theta(x)\,T(y)+A(\eta_\theta(x)).\label{eq:exp_loss_main}
\end{align}

\textbf{Influence score.} Using the loss definition for $\ell(\theta;(x,y))$ in~(\ref{eq:exp_loss_main}), the gradient of the expected loss
at a candidate input $x_{\mathrm{c}}$ under the true data-generating distribution
$p_0(\cdot \mid x_{\mathrm{c}})$ is given by
\begin{align*}
    \nabla_{{\theta}}\,\mathbb E_{y_{\text{c}}\sim p_0(\cdot\mid x_{\text{c}})}\big[\ell\big(\theta;(x_{\text{c}},y_{\text{c}})\big)\big] \;=\; \nabla_\theta \eta_\theta(x_{\text{c}}) \big[A^\prime\big(\eta_\theta(x_{\text{c}})\big) - A^\prime\big(\eta_0(x_{\text{c}}))\big)\big].
\end{align*}
Substituting this expression into the closed-form influence score
(\ref{eq:closed_form_I}) yields the following goal-oriented influence (GOI) score:
\begin{align}
    \mathsf{GOI}(x_{\text{c}})\;=\;\big[\nabla_{\theta}\,\mathcal{G}(\theta)\big]^\top  H_{\theta}^{-1} \; \nabla_\theta\,\eta_\theta(x_{\text{c}})\,\big[A^\prime\big(\eta_\theta(x_{\text{c}})\big) - A^\prime\big(\eta_0(x_{\text{c}}))\big)\big].
    \label{eq:goi_exp_main} 
\end{align}
When $T(y)=y$, the term $A^\prime\big(\eta_\theta(x_{\text{c}})\big) - A^\prime\big(\eta_0(x_{\text{c}}))\big)$ in (\ref{eq:goi_exp_main}) reduces to $\mathbb{E}_\theta[y\!\mid\!x_{\text c}]-\mathbb{E}_{0}[y\!\mid\!x_{\text c}]$, which corresponds to the \emph{prediction bias} at $x_{\mathrm{c}}$. Consequently, the
next point to acquire is selected as 
\begin{equation*}
    x_{\mathrm{next}} \;=\; \arg\max_{x_{\mathrm{c}}} \; \mathsf{GOI}(x_{\mathrm{c}}). 
\end{equation*}

\textbf{Geometric interpretation of the influence function.} The acquisition criterion in~(\ref{eq:goi_exp_main}) selects the candidate whose
\emph{curvature-aware} parameter perturbation
is most aligned with the direction that
maximally decreases $\mathcal{G}$, while prioritizing regions where the model is currently biased.

Define
$\mu:= H_\theta^{-1/2}\,\nabla_\theta\mathcal G(\theta)$, $\nu(x_{\mathrm c}):=H_\theta^{-1/2}\,\nabla_\theta\eta_\theta(x_{\mathrm c})$,
and $b(x_{\mathrm c}) := A^\prime(\eta_\theta(x_{\mathrm c})) - A^\prime(\eta_0(x_{\mathrm c}))$.
With these definitions, the goal-oriented influence score can be written as
\begin{equation}
    \mathsf{GOI}(x_{\mathrm c})
    \;=\;\underbrace{\langle \mu,\;\nu(x_{\mathrm c})\rangle}_{\substack{\textbf{goal alignment}\\\text{in the $H_\theta$-geometry}}}\;\times\;
    \underbrace{b(x_{\mathrm c})}_{\substack{\textbf{prediction bias}\\ \mathbb E_\theta[y\mid x_{\mathrm c}]-\mathbb E_{0}[y\mid x_{\mathrm c}]}},
    \label{eq:goi_exp_breakdown}
\end{equation}
where the first factor quantifies how strongly a curvature-aware update in the direction suggested by $x_{\mathrm c}$ is expected to decrease $\mathcal G$, and then the second factor gates this directional effect by the magnitude of the local predictive bias.

Concretely, $\mu$ acts as a sensitivity vector that encodes which parameter directions matter for improving $\mathcal{G}$, 
whereas $\nu(x_{\mathrm{c}})$ characterizes the candidate's induced update direction after accounting for local curvature.
The preconditioning by $H_\theta^{-1}$ transforms Euclidean gradients into second-order–aware directions, down-weighting parameter components that are already well-determined by existing data and amplifying directions where the model remains uncertain or weakly identified; this yields an approximately reparameterization-invariant notion of alignment and prioritizes candidates that can meaningfully refine the model in parts that are not yet well determined. 
Finally, the multiplicative bias factor $b(x_{\mathrm c})$ converts this leverage in parameter space into expected improvement of the goal. That is, if the model is already well-calibrated at $x_{\mathrm c}$ then $b(x_{\mathrm c})\approx 0$ and the predicted first-order gain is negligible, whereas a larger $|b(x_c)|$ steers acquisition toward regions where the predictive mean deviates from the data-generating process and parameter updates translate into tangible reductions of $\mathcal{G}$.

\textbf{The bias term.} To obtain a tractable approximation of the bias term, we apply a first-order Taylor
expansion of $A'\big(\eta_0(x_{\mathrm c})\big)$ around $\theta_0=\theta$, yielding
\begin{align}
    A^\prime\big(\eta_\theta(x_{\text{c}})\big) - A^\prime\big(\eta_0(x_{\text{c}}))\big) \;\approx\; \,\nabla_\theta\eta_\theta(x_{\text{c}})^\top (\theta-\theta_0)\,A^{\prime\prime}\big(\eta_\theta(x_{\text{c}})\big). 
    \label{eq:bias_term_taylor}
\end{align} 
Substituting this approximation into~\eqref{eq:goi_exp_breakdown} gives
\begin{equation}
    \begin{aligned}
    \mathsf{GOI}(x_{\mathrm c})
    \; =\;\underbrace{\langle \mu,\;\nu(x_{\mathrm c})\rangle}_{\substack{\textbf{goal alignment}\\\text{in the $H$-geometry}}}\;\times\;\underbrace{\nabla_\theta \eta_\theta(x_{\text{c}})^\top
    (\theta-\theta_0)}_{\substack{\textbf{directional parameter bias at $x_\mathrm{c}$}}}\,
    A^{\prime\prime}\big(\eta_\theta(x_{\text{c}})\big). 
    \end{aligned}
    \label{eq:goi_exp_breakdown_param_bias}
\end{equation}
which makes the dependence on the \emph{parameter bias} $\theta-\theta_0$ explicit.

The geometric decomposition in~\eqref{eq:goi_exp_breakdown} highlights \emph{prediction bias} at the output level through the discrepancy in predictive means. In contrast, the approximation
in~\eqref{eq:goi_exp_breakdown_param_bias} replaces the unobservable term
$A'(\eta_0)$ with the estimable \emph{parameter bias} $(\theta-\theta_0)$. This
re-expression attributes bias to concrete directions in parameter space via
$\nabla_\theta \eta_\theta(x)$ and recovers the familiar structure of GLMs, where
$b(x) \approx A''(\eta_\theta(x))\,x^\top(\theta-\theta_0)$.
This form will be leveraged in the subsequent comparison with predictive-entropy–based
acquisition.

\textbf{Connection to predictive entropy (PE) minimization.} The most widely used acquisition criteria in active learning are uncertainty-based.
In contrast, influence-maximizing acquisition does not explicitly require uncertainty
estimates of the predictive model $\mathcal{M}_\theta$. For neural networks, such
uncertainty estimates are often computationally expensive to obtain and unreliable in
the early stages of data acquisition, when the training set is small.

Despite this apparent difference, there is a close connection between influence maximization and predictive entropy minimization in Bayesian GLMs.
In particular, for GLMs with a canonical link, influence maximization can be
interpreted as an approximation to predictive entropy minimization, as formalized in
Proposition~\ref{prop:glm}.
\begin{prop} \label{prop:glm}
Influence maximization approximates predictive entropy minimization in
GLMs with a canonical link. Under mild assumptions, the two
objectives differ by an additional term that calibrates the directional parameter bias
up to a constant.
\end{prop}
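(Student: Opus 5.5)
I will specialize everything to a canonical-link GLM, $\eta_\theta(x)=x^\top\theta$, so that $\nabla_\theta\eta_\theta(x)=x$. The empirical Hessian in \eqref{eq:hessian} becomes $H_\theta=\frac{1}{|\mathcal D|}\sum_{(u,v)\in\mathcal D}A''(u^\top\theta)\,uu^\top$, which is independent of the labels. This is exactly the Fisher information, and hence the inverse covariance of the Laplace (Gaussian) approximation to the Bayesian posterior, $\theta\mid\mathcal D\approx\mathcal N(\hat\theta,(|\mathcal D|H_\theta)^{-1})$. With this identification, the GOI score from the parameter-bias form \eqref{eq:goi_exp_breakdown_param_bias} reads
\[
\mathsf{GOI}(x_{\mathrm c})=\big[\nabla_\theta\mathcal G\big]^\top H_\theta^{-1}x_{\mathrm c}\;\,x_{\mathrm c}^\top(\theta-\theta_0)\;A''(x_{\mathrm c}^\top\theta).
\]

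\textbf{Predictive-entropy side.} Next I will write the PE-minimization acquisition in the same local language. In a Bayesian GLM, the entropy of the posterior predictive (or the expected reduction in posterior entropy) after adding $x_{\mathrm c}$ is governed by the rank-one Fisher update $H\mapsto H+\frac{1}{|\mathcal D|}A''(x_{\mathrm c}^\top\theta)x_{\mathrm c}x_{\mathrm c}^\top$. By the matrix determinant lemma, the entropy reduction is $\frac12\log\big(1+\frac{1}{|\mathcal D|}A''(x_{\mathrm c}^\top\theta)\,x_{\mathrm c}^\top H_\theta^{-1}x_{\mathrm c}\big)$. A first-order linearization of this gives $\propto A''(x_{\mathrm c}^\top\theta)\,x_{\mathrm c}^\top H_\theta^{-1}x_{\mathrm c}$, the familiar leverage term. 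The same leverage appears for entropy over a target set, which becomes $\sum_{x\in\mathcal U}A''(x^\top\theta)\,(x^\top H^{-1}x_{\mathrm c})^2 A''(x_{\mathrm c}^\top\theta)$. If I take $\mathcal G=\mathcal G^{\text{ent}}$, then $\nabla_\theta\mathcal G$ is itself a combination of the vectors $A''(x^\top\theta)\,x$ over $x\in\mathcal U$, so this form should line up with the GOI score.

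\textbf{Comparison.} I will then write the difference $\mathsf{GOI}(x_{\mathrm c})-c\cdot\mathrm{PE}(x_{\mathrm c})$ and isolate the mismatch. The two expressions share the factor $A''(x_{\mathrm c}^\top\theta)$ and a bilinear form in $H_\theta^{-1}$ with $x_{\mathrm c}$ on one side. GOI has $\nabla\mathcal G$ on the other side together with the scalar $x_{\mathrm c}^\top(\theta-\theta_0)$, whereas PE has $x_{\mathrm c}$ itself.

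The ``mild assumptions'' I intend to impose are the following:
\begin{itemize}
\item the goal gradient is approximately aligned with $(\theta-\theta_0)$ in the $H$-geometry, i.e.\ $H^{-1}\nabla\mathcal G\approx \kappa(\theta-\theta_0)$ up to scaling;
\item $\theta$ is close enough to $\theta_0$ that the Taylor step \eqref{eq:bias_term_taylor} holds;
\item $A''$ is bounded.
\end{itemize}
Under these assumptions, I will write $x_{\mathrm c}^\top(\theta-\theta_0)=\langle H^{1/2}(\theta-\theta_0),\nu(x_{\mathrm c})\rangle$. The difference then reduces to a term of the form $A''\,[\langle\mu,\nu\rangle\langle H^{1/2}(\theta-\theta_0),\nu\rangle-c\|\nu\|^2]$. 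This is precisely a correction that calibrates the directional parameter bias, and it vanishes up to a constant when the bias direction matches $\mu$.

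\textbf{Main obstacle.} The hardest step will be making the PE side rigorous enough to exhibit the common $H_\theta^{-1}$ leverage. This requires justifying the Laplace approximation and the linearization of the log-determinant in the small-update regime. The other delicate point is choosing the alignment assumption so that the residual is genuinely an additive term rather than a multiplicative mismatch. If exact alignment is too strong, my fallback is to decompose $H^{1/2}(\theta-\theta_0)$ into its component along $\nu(x_{\mathrm c})$ plus an orthogonal remainder. The parallel component then yields the PE leverage times a constant, and the orthogonal part is the stated correction term.
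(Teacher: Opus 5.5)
Your specialization of the GOI score to the canonical-link GLM is fine. So is your derivation of the leverage $A''(x_{\mathrm c}^\top\theta)\,x_{\mathrm c}^\top H_\theta^{-1}x_{\mathrm c}$ via the matrix determinant lemma and $\log(1+z)\approx z$; the paper gets the same term by citing Lewi et al.

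The gap is in the comparison step. Your assumption $H_\theta^{-1}\nabla_\theta\mathcal G\approx\kappa(\theta-\theta_0)$ gives $\mu\approx\kappa H_\theta^{1/2}(\theta-\theta_0)$. Hence $\langle\mu,\nu(x_{\mathrm c})\rangle\approx\kappa\,x_{\mathrm c}^\top(\theta-\theta_0)$, and so
\[
\mathsf{GOI}(x_{\mathrm c})\approx\kappa\,A''(x_{\mathrm c}^\top\theta)\,\big[x_{\mathrm c}^\top(\theta-\theta_0)\big]^2 .
\]
The $H_\theta^{-1}$ leverage has disappeared, which is exactly the shared structure the proposition is about. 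Your residual becomes $A''\,[\kappa\,(x_{\mathrm c}^\top(\theta-\theta_0))^2-c\,x_{\mathrm c}^\top H_\theta^{-1}x_{\mathrm c}]$. This depends on the angle between $H_\theta^{1/2}(\theta-\theta_0)$ and $\nu(x_{\mathrm c})$, which varies with the candidate. So no choice of $c$ makes it vanish, or even be constant, in general, and the assumption is far from mild.

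The fallback is also miscomputed. Write $H_\theta^{1/2}(\theta-\theta_0)=\alpha(x_{\mathrm c})\nu(x_{\mathrm c})+w_\perp$. The orthogonal part $w_\perp$ contributes nothing to $x_{\mathrm c}^\top(\theta-\theta_0)=\langle H_\theta^{1/2}(\theta-\theta_0),\nu(x_{\mathrm c})\rangle$, so it cannot be the correction term. The coefficient $\alpha(x_{\mathrm c})=x_{\mathrm c}^\top(\theta-\theta_0)/\|\nu(x_{\mathrm c})\|^2$ depends on the candidate, so the parallel part is not ``PE leverage times a constant.''

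What you are missing is that no alignment assumption is needed: the relation is multiplicative and exact. Since $\|\nu(x_{\mathrm c})\|^2=x_{\mathrm c}^\top H_\theta^{-1}x_{\mathrm c}$,
\[
\mathsf{GOI}(x_{\mathrm c})=\underbrace{A''(x_{\mathrm c}^\top\theta)\,\|\nu(x_{\mathrm c})\|^2}_{-\mathsf{PE}(x_{\mathrm c})}\cdot\frac{\langle\mu,\nu(x_{\mathrm c})\rangle}{\|\nu(x_{\mathrm c})\|^2}\cdot x_{\mathrm c}^\top(\theta-\theta_0).
\]
This is the paper's \eqref{eq:goi_vs_pe}: the predictive-entropy leverage reweighted by a goal-alignment factor and the directional bias. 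The only ``mild assumptions'' the paper uses are those behind the log-linearization (bounded $A''$ and $x_{\mathrm c}^\top H_\theta^{-1}x_{\mathrm c}\to0$) and the Taylor step \eqref{eq:bias_term_taylor}.

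Your fallback, done correctly, already gives the same factorization: $\mathsf{GOI}=A''\|\nu\|^2\cdot\alpha(x_{\mathrm c})\langle\mu,\nu\rangle$. It was forcing an additive residual that led you to an assumption that breaks the argument.
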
 
\begin{proof}
    Consider a generalized linear model in the form of (\ref{eq:exponential_family_main}) with a canonical link $\eta_\theta (x) = x^\top \theta$ and $T(y)=y$. 
    The conditional entropy of model after acquiring $(x_{\mathrm{c}},y_{\mathrm{c}})$ can be derived as 
    \begin{align}
        \mathcal H(\theta \mid \mathcal{D}\cup \{(x_{\mathrm{c}}, y_{\mathrm{c}})\})=\frac{1}{2}\,\mathbb E_{y_{\mathrm{c}}\sim p_0(\cdot\mid x_{\mathrm{c}})}\big[\log \vert H_{\theta_{\mathrm{c}}}^{-1}\vert\big]  + const
        \label{eq:conditional_entropy}
    \end{align} 
    where $H_{\theta_{\mathrm{c}}}$ is the Hessian of negative log likelihood with model parameter $\theta_c = \theta(\mathcal{D}\cup \{(x_{\mathrm{c}},y_{\mathrm{c}})\})$.
    
    Under the canonical-link assumption and properties (\ref{sec:exponential}.2) and (\ref{sec:exponential}.3), minimizing (\ref{eq:conditional_entropy}) can be simplified to minimize \citep{lewi2007efficient} 
    \begin{align}
       \mathbb E_{y_{\mathrm{c}}\sim p_0(\cdot\mid x_{\mathrm{c}})} \big[\log |H_{\theta_{\mathrm{c}}}^{-1}|\big] = \log |H_{\theta}^{-1}| -
       \log \Big(1+ x_{\mathrm{c}}^{\top}H_{\theta}^{-1}x_{\mathrm{c}}\,A^{\prime\prime}\big(\eta_\theta(x_{\mathrm{c}})\big) \Big). 
    \end{align}
    
    Denote $z=x_{\mathrm{c}}^{\top}H_{\theta}^{-1}x_{\mathrm{c}}\,A^{\prime\prime}\big(\eta_\theta(x_{\mathrm{c}})\big)$. Given that $A^{\prime\prime}\big(\eta_\theta(x_{\mathrm{c}})\big)$ is bounded and $x_{\mathrm{c}}^{\top}H_{\theta}^{-1}x_{\mathrm{c}}\rightarrow 0$ as $\vert\mathcal{D}\vert\rightarrow \infty $~\citep{Paninski2005asymptotic}, if $z$ is sufficiently small, the standard linear approximation $\log(1+z) = z+\mathrm{o}(z)$ further simplifies the acquisition objective function as 
    \begin{align}
        \mathsf{PE}(x_{\mathrm{c}})\,:=\, -\, x_{\mathrm{c}}^\top H_\theta^{-1}x_{\mathrm{c}}\,A^{\prime\prime}\big(\eta_\theta(x_{\mathrm{c}})\big)\label{eq:Ipe_glm}.
    \end{align}
    On the other hand, the acquisition score under GOIMDA (\ref{eq:goi_exp_breakdown_param_bias}) is 
    \begin{align}
         \mathsf{GOI}(x_{\mathrm{c}})\;:=\, \big[\nabla_{\theta}\,\mathcal{G}(\theta)\big]^\top  H_{{\theta}}^{-1} \, x_{\mathrm{c}} \;\underbrace{x_{\mathrm{c}} ^{\top}  ({\theta}-\theta_0)}_{\textbf{prediction bias}}\; A^{\prime\prime}\big(\eta_\theta(x_{\mathrm{c}} )\big),
        \label{eq:I_glm}
    \end{align}
    where the additional directional parameter bias term coincides with the prediction bias under the context of GLMs with a canonical link.
    
    By comparing objective in (\ref{eq:Ipe_glm}) with the one in (\ref{eq:I_glm}), 
    GOIMDA differs by multiplicative \emph{goal-alignment} and \emph{bias} weights:
    \begin{align}
        \mathsf{GOI}(x_{\mathrm c})\;\propto\;\mathsf{PE}(x_{\mathrm c})\cdot \frac{\langle H_\theta^{-1/2}\nabla_\theta\mathcal G(\theta),\; H_\theta^{-1/2}x_{\mathrm c}\rangle}{\|H_\theta^{-1/2}x_{\mathrm c}\|^2}\cdot \big[x_{\mathrm c}^\top(\theta-\theta_0)\big].
        \label{eq:goi_vs_pe}
    \end{align}
This observation indicates that GOIMDA implicitly calibrates model uncertainty. The only distinction is that GOIMDA weights each parameter dimension according to its current bias, whereas predictive entropy minimization treats all dimensions uniformly.
\end{proof}

Proposition~\ref{prop:glm} shows that, for canonical GLMs, GOIMDA inherits the same
$H_\theta^{-1}$ leverage structure
$A''(\eta_\theta(x_{\mathrm c}))\,x_{\mathrm c}^\top H_\theta^{-1} x_{\mathrm c}$
that underlies predictive-entropy selection. Crucially, GOIMDA \emph{calibrates} this
uncertainty signal through two additional factors:
(i) a \emph{goal-alignment} term that projects parameter updates onto the descent
direction of $\mathcal{G}$, and
(ii) a \emph{prediction-bias} term that emphasizes inputs where the model disagrees with
the data-generating mechanism.

As a result, GOIMDA is \emph{uncertainty-aware}; it preserves the predictive-entropy
signal, while actively steering acquisition toward directions that most effectively
reduce the stated goal and toward regions of systematic error, all without requiring a
posterior distribution over~$\theta$.

Consequently, although GOIMDA is derived as a goal-directed influence maximization
procedure and is therefore inherently exploitative, it remains \emph{exploration-aware}:
it preferentially scores candidates in high-leverage, high-variance regions, thereby
implicitly encouraging exploration. This should be distinguished from Bayesian
exploration, which integrates over a posterior on~$\theta$; here, $\theta(\mathcal D)$
is held fixed.

In active learning, where prediction calibration is rarely made explicit, the bias term
focuses acquisition on misfit or systematically mislabeled regions. Empirically, the
results in Section~\ref{sec:experiment_bias_term} show that incorporating this bias
achieves target accuracy with substantially fewer labels than omitting it, confirming
that \emph{calibrating bias materially improves sample efficiency}. In global
optimization, GOIMDA plays the role of an \emph{exploration-aware, exploitation-focused} alternative that mirrors the exploitation component of Bayesian optimization without
requiring a posterior.

\section{Example Applications of GOIMDA}
\label{sec:exponential_example} 
We instantiate the GOIMDA framework for three representative use cases: \emph{global optimization of noisy functions}, where the goal is to identify
the minimizer of a complex, expensive-to-evaluate objective under noise; 
\emph{hyperparameter tuning}, where the aim is to achieve optimal test performance while
minimizing the number of costly training-and-validation evaluations; and \emph{deep active learning}, where the objective
is to train a classification model that generalizes well using as few labeled samples as
possible.

Owing to the flexibility of the user-defined goal objective function, GOIMDA applies
naturally across a wide range of problem settings. As introduced in
Section~\ref{sec:method}, the goal objective can be defined to address global optimization
problems involving black-box objectives that lack analytical expressions and do not
admit first- or second-order derivatives; see~(\ref{eq:goal_opt}). We also presented the goal
functions tailored to deep active learning, where the objective is to optimize
performance on a validation or test set under a limited labeling budget; see~(\ref{eq:goal_nll}) and~(\ref{eq:goal_ent}).

In this section, we focus on exponential family models to illustrate how GOIMDA can be
instantiated in practice for both global optimization and active learning. For each
example, we first specify the corresponding goal objective $\mathcal{G}$, and then
derive the resulting influence-function–based acquisition rule
(\ref{eq:x_next}) used in Algorithm~\ref{algo:GOI_general}.

\subsection{Example Application I: Iterative Global Optimization with Noisy Observations}
We consider a global optimization problem of the form
$x_{\min} = \arg\min_x f(x)$ for an unknown objective function $f$. Direct evaluations of
$f$ are unavailable; instead, we observe noisy function values
$y = f(x) + \varepsilon$, where $\varepsilon$ is zero-mean noise. At each iteration, model $\mathcal{M}_\theta$ with parameters
$\theta := \theta(\mathcal{D})$ is trained on the currently acquired dataset
$\mathcal{D} = \{(u_i, v_i)\}_{i=1}^n$ to approximate the underlying function $f$.

We define the goal objective $\mathcal{G}$ as in (\ref{eq:goal_opt}), the expected value of the true function at the
model's recommended minimizer. Under the exponential family assumption with
$T(y)=y$, this objective can be written as  
\begin{align}
    \mathcal{G}(\theta) \;:=\;\mathbb E_{y\sim p_0(\cdot\mid \hat{x}^\ast_\theta)}[\, y\,] \;\stackrel{(\ref{sec:exponential}.2)}{=}\;  A^{\prime}\big(\eta_0(\hat x_\theta^\ast)\big),
    \label{eq:GO_noisy_goal}
\end{align} 
with the
model's recommended minimizer 
\begin{align}
    \hat{x}_\theta^\ast = \arg\min_{x} \mathbb{E}_{y\sim p_{\theta}(\cdot\mid x)}[\, y\,].
    \label{eq:GO_noisy_xmin}
\end{align}

By construction, GOIMDA selects the candidate whose acquisition yields
the largest \emph{instantaneous expected decrease} in the true objective evaluated at the
model's current recommendation. 

The gradient of the goal objective with respect to $\theta$ follows from the chain rule:
\begin{align}
    \nabla_\theta\mathcal{G}(\theta)\; := \;A^{\prime\prime}\big(\eta_0(\hat{x}_\theta^\ast)\big)\,\big[\frac{\partial}{\partial \theta}\hat{x}_\theta^\ast\big]\,\big[\nabla_{x}\eta_0(\hat{x}_\theta^\ast)\big].
    \label{eq:GO_goal_gradient}
\end{align}
The derivative $\tfrac{\partial}{\partial \theta}\hat{x}_\theta^\ast$ can be obtained via implicit differentiation of the optimality conditions defining
$\hat{x}_\theta^\ast$ (see Appendix \ref{sec:appendix_partial_x} for details), yielding the approximation
\begin{align}
    \frac{\partial}{\partial\theta}\hat x_\theta^\ast\;\approx\; \Big[\frac{\partial^2}{\partial \theta\partial x}  A^\prime\big(\eta_\theta(\hat x_\theta^\ast)\big)\Big]\,\big[\nabla_x^2 A^\prime\big(\eta_\theta(\hat x_\theta^\ast)\big)\big]^{-1}.
\end{align}

Substituting these expressions into the influence function
(\ref{eq:goi_exp_main}) yields the following explicit decomposition of the
Goal-Oriented Influence score:
\begin{equation} 
    \begin{aligned}
        \mathsf{GOI}(x_\text{c})\;=\;\underbrace{A''\!\big(\eta_0(\hat x_{\theta}^\ast)\big)\,\nabla_x \eta_0(\hat x_{\theta}^\ast)^\top}_{\substack{\textbf{how the true objective drops}\\\text{if }\hat x_{\theta}^\ast\text{ moves}}} 
        \underbrace{\big[\nabla_x^2 A^\prime(\eta_\theta(\hat x_{\theta}^\ast))\big]^{-1}\!\big[\tfrac{\partial^2}{\partial\theta\partial x}A^\prime(\eta_\theta(\hat x_{\theta}^\ast))\big]}_{\substack{\textbf{how }\hat x_{\theta}^\ast \textbf{ moves}\text{ when }\theta\text{ moves}}}
        \underbrace{H_\theta^{-1}\,\nabla_\theta\eta_\theta(x_{\text{c}})\,b(x_{\text{c}})}_{\substack{\textbf{how }\theta \textbf{ moves}\\\text{when adding }(x_{\mathrm c},y_{\mathrm c})}}, 
    \end{aligned} 
    \label{eq:goi_noisy_expand}
\end{equation}
where $b(x_\text{c}):=A^\prime\big(\eta_\theta(x_{\text{c}})\big) - A^\prime\big(\eta_0(x_{\text{c}}))\big)$ denotes the prediction-bias term introduced in
(\ref{eq:goi_exp_breakdown}).

In practice, the unknown true natural parameter $\eta_0(\cdot)$ is approximated using a
Jackknife resampling strategy with the deep ensemble model
$\tilde{\mathcal{M}}_\phi$, and inverse HVPs are computed
stochastically, as described in Section~\ref{sec:method}.

\subsection{Example Application II: Hyperparameter Optimization}
We next consider hyperparameter tuning for neural networks under a transfer learning
setup, which induces a more complex optimization problem. Hyperparameters determine the
model architecture and training configuration, and different choices of hyperparameters
typically lead to different trained parameters and, consequently, different test
performance. Evaluating even a single hyperparameter configuration requires training and
validating a neural network, which is computationally expensive. The goal is therefore to
identify hyperparameters that yield strong test performance while minimizing the number
of costly training and validation runs.

Formally, let $\xi\in \Xi$ denote the hyperparameters of a model trained on a fixed training
dataset
$\mathcal{D}^{\mathrm{tr}} = \{(x_i^{\mathrm{tr}}, y_i^{\mathrm{tr}})\}_{i=1}^n$, and let
$(x,y)\in \mathcal{X}\times \mathcal{Y}$ denote a test input–output pair, where $y$ may be unobserved.\footnote{For ease of
exposition, we consider a single test input $x$.} 
Denote the predictive test negative log-likelihood loss obtained by a model trained with hyperparameters $\xi$ as $h_0(x,\xi)$, which depends on both the test input $x$ and the hyperparameters $\xi$.
Due to stochastic effects such as random initialization and optimization noise, repeated training with the
same $\xi$ can yield different validation outcomes. As a result, we only observe noisy
evaluations of $h_0$ in its noisy form, $r=h_0(x,\xi)+\varepsilon$, where $\varepsilon$ denotes observation noise. The goal is to optimize $h_0$ with
respect to $\xi$ given $x$, as the ``mean'' test performance is indifferent to stochastic variability in the model training.

Let $q_0(\cdot \mid x,\xi)$ denote the true conditional distribution of the noisy
observation $r$ given the concatenated input $(x,\xi)$, parameterized by $\theta_0$. We
train a model with parameters $\theta$ on the currently acquired dataset
$\mathcal{D}$, whose elements take the form $([x,\xi], r)$. Under the exponential family
assumption, the likelihood is
\begin{align}
    q_\theta(r\mid x,\xi) \;=\; h(r)\exp\Big (\eta_\theta(x,\xi)\, T(r)-A\big(\eta_\theta(x,\xi)\big)\Big).
\end{align}

The search for the set of hyperparameters that yields optimal expected test performance is then formulated via
the goal objective
\begin{align}
    \mathcal{G}(\theta) \;:=\; \mathbb E_{r\sim q_{0}(\cdot\mid x,\hat{\xi}_{\theta}^\ast)}[\, r \,] \label{eq:hpo_formulation},
\end{align}
where $\hat{\xi}^{\ast}_{\theta}$ denotes the minimizer of model $\mathcal{M}_\theta$'s predictive test negative log-likelihood loss
\begin{align}
\hat{\xi}^{\ast}_{\theta}\;=\;
\arg\min_{\xi}\,
\mathbb{E}_{r \sim q_{\theta}(\cdot \mid x,\xi)}[\,r\,].
\label{eq:GO_hyperparam_xmin}
\end{align}
The formulation in (\ref{eq:hpo_formulation}) mirrors the iterative global optimization with noisy
observations in (\ref{eq:GO_noisy_goal}),
with the decision variable specialized to hyperparameters for fixed test inputs.
Consequently, the influence score for hyperparameter
acquisition can be derived analogously to~(\ref{eq:goi_noisy_expand}).

\subsection{Example Application III: Deep Active Learning}
We illustrate how Goal-Oriented Influence-Maximizing Data Acquisition (GOIMDA) can be
applied to deep active learning under exponential family distributions. We first define
a test-centric goal objective function $\mathcal{G}(\theta)$ and then derive a closed-form
\emph{influence score} for adding a candidate $(x_{\mathrm c}, y_{\mathrm c})$. The
resulting acquisition criterion couples the $H_\theta^{-1}$ geometry with a
\emph{prediction-bias} weighting.

Formally, consider a classification task where $x \in \mathbb{R}^d$ denotes the input
features and $y \in \mathbb{Z}$ denotes the output label. This setting also captures the
motivating biological example, where $x$ corresponds to experimental stimuli and $y$ to
the measured system response, and where conducting experiments to read the system response is costly. At each iteration, a
deep neural network $\mathcal{M}_\theta$ with parameters
$\theta := \theta(\mathcal{D})$ is trained on the currently labeled dataset
$\mathcal{D}$ to model the conditional distribution of the response $y$ given the stimuli $x$. Test inputs
$(x,y)$ are drawn from the same unknown data-generating distribution $p_0$, with $y$
unobserved at acquisition time.

The objective of active learning is to iteratively select inputs $x_{\mathrm{next}}$ for
label acquisition such that the model achieves low test loss
$\ell(\theta; (x, y))$ using as few labeled samples as possible. 

To optimize the model performance on test data $(x,y)$ where $y$ is unknown, a natural test-centric
goal objective is
\begin{align}
     \mathcal{G}(\theta)\; :=\; \mathbb E_{y\sim p_0(\cdot\mid x)}\big[\ell \big(\theta; (x,y)\big)\big].
     \label{eq:al_example_goal}
\end{align} 
Using the explicit exponential-family loss in~(\ref{eq:exp_loss_main}), the gradient of
$\mathcal{G}$ takes the form
\begin{align}
    \nabla_\theta \mathcal{G}(\theta)\;=\;\big[A^\prime\big(\eta_\theta(x)\big) - A^\prime\big(\eta_0(x))\big)\big]\,\nabla_\theta\eta_\theta(x).
\end{align}

Substituting this expression into the influence function
(\ref{eq:goi_exp_main}) yields the influence score for a candidate $x_{\mathrm c}$:
\begin{align}
    \mathsf{GOI}(x_{\text{c}})&\;=\;\big[A^\prime\big(\eta_\theta(x)\big) - A^\prime\big(\eta_0(x))\big)\big]\,\nabla_\theta\eta_\theta(x)^\top H_{\theta}^{-1}   \nabla_\theta\eta_\theta(x_{\text{c}})\,\big[A^\prime\big(\eta_\theta(x_{\text{c}})\big) - A^\prime\big(\eta_0(x_{\text{c}}))\big)\big]\nonumber \\
    &\;\propto\; \nabla_\theta\eta_\theta(x)^\top H_{\theta}^{-1}   \nabla_\theta\eta_\theta(x_{\text{c}})\,\big[A^\prime\big(\eta_\theta(x_{\text{c}})\big) - A^\prime\big(\eta_0(x_{\text{c}}))\big)\big],
    \label{eq:al_influence_fn_v0}
\end{align}
where the proportionality follows by discarding terms independent of $x_{\mathrm c}$.
Consequently, the next point to acquire is
\begin{align}
    x_{\text{next}} \;=\; \arg\max_{x_{\text{c}}}\,\nabla_\theta\eta_\theta(x)^\top H_{\theta}^{-1}   \nabla_\theta\eta_\theta(x_{\text{c}})\,\big[A^\prime\big(\eta_\theta(x_{\text{c}})\big) - A^\prime\big(\eta_0(x_{\text{c}}))\big)\big].
    \label{eq:al_influence_fn_v0_simplified}
\end{align}

Under the canonical-link assumption with $T(y)=y$, this expression simplifies to
\begin{align}
    x_{\text{next}} \;=\; \arg\max_{x_{\text{c}}}\, x^\top  H_{{\theta}}^{-1} \, x_{\mathrm{c}} \;\underbrace{x_{\mathrm{c}} ^{\top}  ({\theta}-\theta_0)}_{\textbf{prediction bias}}\; A^{\prime\prime}\big(\eta_\theta(x_{\mathrm{c}} )\big),
\end{align}
which recovers~(\ref{eq:I_glm}) with the goal objective defined
in~(\ref{eq:al_example_goal}) for the active learning setting.

In practice, the prediction-bias term is estimated using a Jackknife resampling strategy
via the deep ensemble surrogate $\tilde{\mathcal{M}}_\phi$, and inverse
HVPs are computed stochastically (see
Section~\ref{sec:method}).

\section{Empirical Studies}
\label{sec:emipirical_studies}
We empirically evaluate GOIMDA on a diverse set of controlled and realistic learning and
optimization tasks.\footnote{Code to reproduce our experiments is available at
\href{https://github.com/weichiyao/GOIMDA}{github.com/weichiyao/GOIMDA}.}
We begin with a synthetic logistic-regression study designed to isolate the effect of
the parameter-bias term in the acquisition rule. Next, we evaluate GOIMDA
for noisy global optimization of black-box test functions, benchmarking against standard
Gaussian-process Bayesian optimization methods with common acquisition functions
\citep{wang2017MES,scott2011KGgaussianmodel,jones1998EI,Torn1989PI,Srinivas2010UCB}.
We also apply GOIMDA to hyperparameter tuning on CIFAR-10 under distribution shift,
where the goal is to select hyperparameters that improve performance on a target test
distribution using labels only from a different source distribution. Finally, we study predictive learning on
image and text classification benchmarks \citep{mnist,cohen2017emnist,text_movie},
comparing GOIMDA against uncertainty-based active learning baselines such as BALD
\citep{Houlsby2011BayesianAL,gal2017deep,Kirsch2019batchBALD}.

\subsection{On the importance of the parameter-bias term in~(\ref{eq:goi_exp_breakdown_param_bias})} \label{sec:experiment_bias_term}

The approximation in~(\ref{eq:goi_exp_breakdown_param_bias}) makes the dependence on the
parameter bias $\theta-\theta_0$ explicit. This parameter-bias form is useful in two
ways. Conceptually, it attributes prediction bias to specific parameter directions via
$\nabla_\theta\eta_\theta(x)$, which supports diagnostics and motivates targeted
regularization. Computationally, it enables a plug-in implementation using
Jackknife/Bootstrap surrogates of $\theta_0$, requiring a single solve for
$H_\theta^{-1}\nabla_\theta\mathcal{G}$ and per-candidate vector--Jacobian (or
Jacobian--vector) products. Hence, the parameter-bias form is both interpretable and computationally practical.

\begin{wrapfigure}{r}{0.5\textwidth}
    \centering
    \includegraphics[scale=0.6]{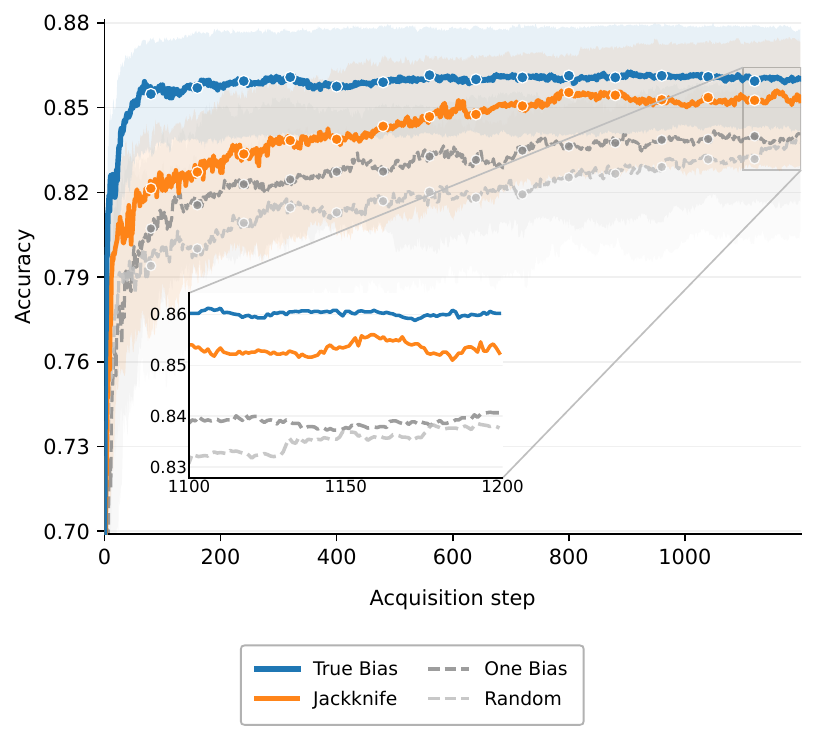}
    \caption{ \small  
     \textbf{The parameter-bias term is crucial for effective data acquisition, and a good approximation significantly improves the acquisition performance.}
     Using the true bias term (blue)
     reaches high accuracy with fewer acquisitions, and a principled approximation (orange)
     outperforms ignoring the bias term (dark gray dashed).
     The inset zoom highlights the persistent gap. Shaded regions indicate variability across runs.
     Test accuracy is shown versus the number of acquired labels (averaged over 200 repetitions). Higher is better.} 
    \label{fig:sim_logistic}
\end{wrapfigure}

In canonical GLMs (Proposition~\ref{prop:glm}), GOIMDA shares the same $H_\theta^{-1}$
leverage structure as predictive-entropy selection; the key difference is GOIMDA's bias
calibration, which re-weights parameter directions by
their current bias instead of treating them uniformly. In particular, apart from goal alignment, the additional factor is the prediction-bias
term $x_{\mathrm c}^\top(\theta-\theta_0)$. If $\theta-\theta_0$ is replaced
by a vector of ones, the influence objective~(\ref{eq:I_glm}) reduces to the
predictive-entropy objective~(\ref{eq:Ipe_glm}) up to a scaling constant, recovering a
bias-agnostic leverage rule.

To quantify the practical importance of $\theta-\theta_0$, we run a controlled
simulation that compares acquisition variants based on~(\ref{eq:I_glm}). Inputs
$x\in\mathbb{R}^d$ are generated from a low-rank latent-variable model
$z\in\mathbb{R}^l$, $l<d$ \citep{PPCA}: $x=Wz+\epsilon$, with
$z\sim\mathcal{N}(0,I)$ and $\epsilon\sim\mathcal{N}(0,\sigma^2I)$. We set $d=20$,
$l=3$, and $\sigma=0.1$. Outcomes are binary with $p_0(y\mid x)$ given by a Bernoulli
GLM. For each of 200 repetitions, we generate $50{,}000$ training points and
$5{,}000$ test points. At each acquisition step, we refit logistic regression on the
labeled set and select the next point by maximizing~(\ref{eq:I_glm}) under different
treatments of the bias term; random pool sampling serves as a baseline. All runs start
from two labeled points (one per class), and the remaining training points form the candidate pool. Test accuracy is recorded after each acquisition.

Figure~\ref{fig:sim_logistic} compares four acquisition strategies including (i) \textit{True Bias}: influence maximization using the true bias term $\theta-\theta_0$; (ii) \textit{Jackknife}: influence maximization using a Jackknife estimate of $\theta-\theta_0$; (iii) \textit{One Bias}: influence maximization with $\theta-\theta_0$ replaced by a vector of ones; and (iv) \textit{Random}: random acquisition from the pool.

As expected, \textit{True Bias} reaches high accuracy with the fewest acquisitions, and
\textit{Jackknife} closely tracks it. \textit{One Bias} still improves over random
sampling but consistently lags behind the bias-aware variants. Overall, ignoring the
bias term reduces acquisition efficiency: the better we approximate $\theta-\theta_0$,
the faster accuracy improves, and the fewer labels are required.

\subsection{Noisy black-box function optimization}

We benchmark GOIMDA on noisy Branin, Drop-Wave, and Ackley functions, the black-box test functions from \citep{bingham2015testfunc},
where each query returns $y=f(x)+\varepsilon$ with
$\varepsilon\sim\mathcal{N}(0,\sigma^2)$. We consider $\sigma\in\{0.1,0.2\}$ and start
each run with 5 initial observations.
See full details in Appendix \ref{appendix:noisy_black_box}.

Figure \ref{fig:ao_noisy_function} provides a performance comparison on 2D Branin and 5D Ackley functions between GOIMDA and five other commonly used Gaussian Processes (GP)-based Bayesian optimization methods with different acquisition functions.
In each of these plots, we show the \textit{immediate regret}, which measures the difference between the outcome of the best possible decision ($\min_{x\in\mathcal{S}} f(x)$ for some feasible set $\mathcal{S}$) and the outcome of the decision made by each active optimization method. Given its noisy nature, the regret value may go up in consecutive acquisition steps, but should, in general, decline if solved by an effective optimization algorithm.

Across all three benchmarks, GOIMDA (blue dashed) reduces immediate regret much faster than the GP baselines, typically reaching the lowest regret within the first few dozen acquisitions and then stabilizing. On Branin, GOIMDA quickly drops below the other methods for both noise levels; when noise increases from $\sigma^2=0.01$ to $\sigma^2=0.04$, the GP baselines plateau at noticeably higher regret while GOIMDA maintains a substantially lower plateau, widening the gap. On Ackley, the contrast is even sharper: GOIMDA rapidly drives regret down by orders of magnitude, whereas other GP baselines improve slowly and remain far above GOIMDA throughout; this separation persists and effectively becomes more pronounced under the higher noise setting. On Dropwave, all methods are more competitive, but GOIMDA still achieves the lowest or near-lowest regret earlier, and the advantage becomes clearer at $\sigma^2=0.04$, where increased noise slows the baselines more than GOIMDA. Overall, higher noise degrades all methods, but GOIMDA is markedly more noise-robust, retaining rapid early gains and a lower final regret.

\subsection{Hyperparameter tuning under distribution shift}

We study hyperparameter optimization when training labels are available only from a labeled source distribution $(X,Y)$, but performance is evaluated on an unlabeled target distribution under distribution shift $(\tilde{X},\tilde{Y})$ with unknown $\tilde{Y}$. 
The marginals $p_X$ and $p_{\tilde{X}}$ may differ, while the conditional model is assumed to share the same form $p_0:=p_{\theta_0}(Y\mid X)$ for unknown $\theta_0$.

Concretely, we instantiate this setting on CIFAR-10 with a Pre-Activation ResNet backbone. We construct an imbalanced labeled source set and an unlabeled target set drawn from a restricted set of classes. 
Full details of the dataset construction and splits can be found in Appendix \ref{appendix:hyperparameter_tuning}.

\begin{wrapfigure}{r}{0.5\textwidth}
    \centering
    \includegraphics[scale=0.63]{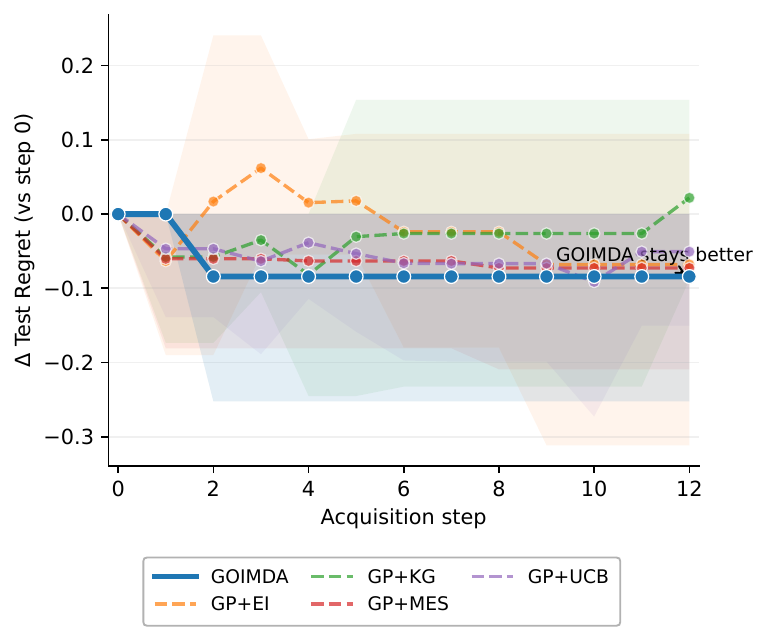}
    \caption{ {\small
    \textbf{GOIMDA selects the hyperparameters that provide the best prediction performance on the target test set compared to Bayesian optimization methods}.
    The performance is evaluated in terms of the $\Delta$ test regret relative to the initial configuration (step 0) across acquisition steps. 
    Solid line shows GOIMDA; dashed lines show Bayesian optimization baselines. Shaded regions indicate 95\% bootstrap confidence intervals of the mean across 3 trials. Negative values correspond to improved target-set performance.
    }
    }
    \label{fig:HPO_CIFAR10}
\end{wrapfigure}

In this setting, hyperparameter selection is driven only by source training/validation
performance can favor configurations that fit the imbalanced source distribution, but
transfer poorly to the target distribution. GOIMDA instead evaluates hyperparameters by
their estimated influence on the expected prediction loss on the target distribution,
while still leveraging the labeled source data for training. 

Figure~\ref{fig:HPO_CIFAR10} shows that GOIMDA consistently identifies hyperparameter settings that improve target performance under distribution shift more reliably than Bayesian optimization baselines. Measured by the $\Delta$ test regret relative to the initial configuration (step 0), GOIMDA quickly attains a stable negative improvement after only a few acquisitions and maintains the best mean regret across subsequent steps. In contrast, the Bayesian optimization baselines exhibit larger variance and less consistent progress, often oscillating around smaller improvements and occasionally reverting toward worse target performance; this indicates that optimizing based on source-driven signals can select configurations that do not transfer well. Overall, the results suggest that explicitly scoring candidates by their estimated influence on target loss makes hyperparameter search more sample-efficient and robust to distribution shift.

\subsection{Predictive learning} 

We evaluate GOIMDA in standard active learning loops: at each iteration, the model is
trained on the current labeled set and then selects the next point from a pool
$\mathcal{S}$. Across datasets, we use comparable feedforward architectures (stacked
\textsf{linear-relu} layers followed by \textsf{softmax}) for all methods. BALD adds
\textsf{dropout} layers and uses MC dropout for approximate inference \citep{gal2017deep}. All
models are trained with Adam \citep{adam2014} (learning rate $0.001$, $\beta_1=0.9$,
$\beta_2=0.999$) on a single GPU. Test accuracy is measured after each acquisition.
Results are aggregated over four independent trials; error bars show the mean and the
lower/upper quartiles.

We consider three benchmarks spanning image and text classification: MNIST \citep{mnist}, EMNIST Letters \citep{cohen2017emnist}, and binary sentiment classification derived from the Rotten Tomatoes phrase dataset \citep{movie2005}. For each benchmark, we follow a standard pool-based protocol with a small balanced initialization, a held-out validation set, and the remaining training points as the acquisition pool; see Appendix~\ref{appendix:predictive_learning} for exact split sizes and preprocessing choices.

\begin{table}[t]
  \caption{\textbf{GOIMDA requires fewer samples than BALD and random acquisition to achieve high accuracy on MNIST}. The table reports 25\%-, 50\%-, and 75\%-percentiles for the number of required data points to reach 80\%, 90\%, and 95\% accuracy on MNIST. Higher is better.}
  \vspace{1em}
  \label{tbl:mnist}
  \centering
  \begin{tabular}{rC{3cm}C{3cm}C{3cm}}
    \toprule
    & \multicolumn{3}{c}{Acquisition Methods}  \\
    \cmidrule{2-4}
    \% Accuracy &  GOIMDA   & BALD     & Random \\
    \midrule
    80\% &  $88/108/116$    & $104/109/117$  &  $138/150/173$   \\
    90\% &  $374/422/441$   &   $432/445/475$ &     $846/901/977$ \\
    95\% &  $1060/1150/1206$ &   $1514/1589/1678$      & $4265/4522/4731$   \\
    \bottomrule
  \end{tabular}
\end{table}

\paragraph{MNIST} 
We follow the standard protocol of \citep{gal2017deep} with a small balanced initial labeled set and a fixed validation hold-out; remaining points form the pool (see Appendix \ref{appendix:mnist} for more details). 
GOIMDA and Random acquisitions are assessed with an architecture of three \textsf{linear-relu} layers with hidden widths 392 and 128. BALD uses a
similar architecture with two \textsf{linear-dropout-relu} layers followed by the final {linear-softmax} output layer, with the same hidden unit dimensions. We use $100$ MC dropout samples in BALD.
The expected loss for the test data points and that for candidate data points in GOIMDA are approximated using 10 Jackknife samples.

\paragraph{EMNIST Letters} 
We follow a similar pool/validation setup as in MNIST (see Appendix \ref{appendix:emnist} for more details).
Similar to what is used in MNIST, GOIMDA, and Random have three \textsf{linear-relu} layers with hidden unit dimensions $392$ and $128$, respectively, and 
BALD adds two additional \textsf{dropout} layers for inference. In addition, $50$ MC dropout samples are used in BALD for acquisition.  
To compute the expected loss for the test data points and the expected loss for candidate data points, $5$ Jackknife samples are used to approximate the expected loss in GOIMDA.

\paragraph{Movie reviews (Rotten Tomatoes)} 
For the Rotten Tomatoes dataset, we form a binary task by mapping the original 5-level sentiment labels into $\{0,1\}$ after removing neutral phrases, then apply a bag-of-words preprocessing pipeline; see full details in Appendix \ref{appendix:movie}.
The initial training set contains a balanced dataset of size 20. 
GOIMDA and Random have three \textsf{linear-relu} layers with hidden unit dimensions $64$ and $64$, respectively. BALD again adds two additional \textsf{dropout} layers for inference. In addition, $50$ MC dropout samples are used in BALD for acquisition.  
To compute the expected loss for the test data points and the expected loss for candidate data points, $10$ Jackknife samples are used to approximate the expected loss in GOIMDA.

\begin{figure}[t]
    \centering
    \includegraphics[scale=0.435]{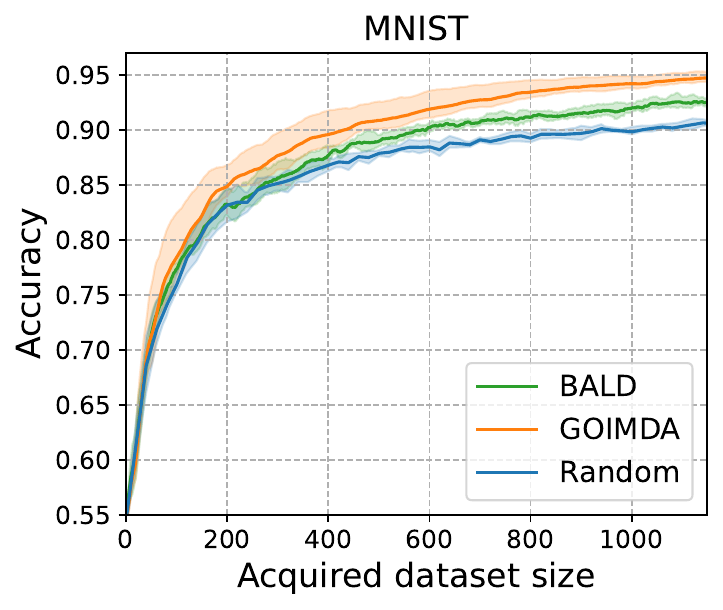}
    \includegraphics[scale=0.435]{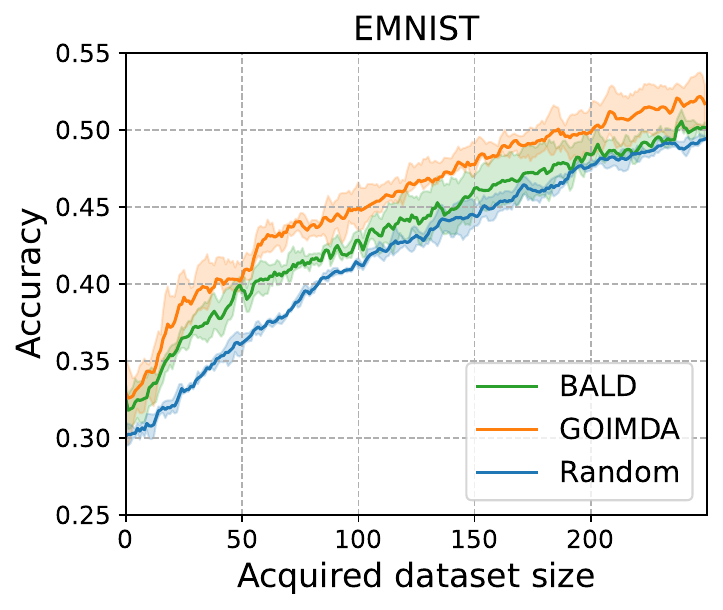}
    \includegraphics[scale=0.435]{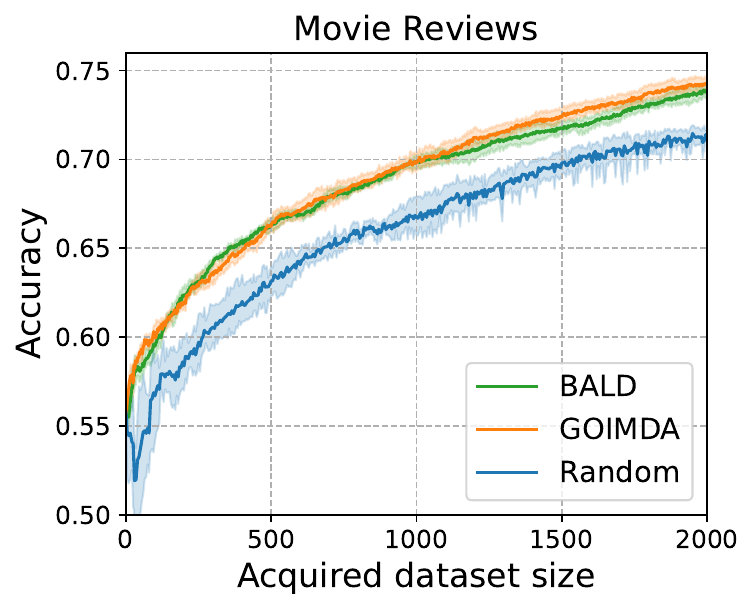}
    \caption{ {\small
    \textbf{GOIMDA outperforms both random acquisition and BALD on classification tasks on images of digits from MNIST (Left) and EMNIST (Middle), and the sentiment of movie reviews from the Rotten Tomatoes dataset (Right)}. The test accuracy is evaluated at each acquisition step.
    Left: GOIMDA outperforms both random acquisition and BALD on MNIST. 
    Middle: GOIMDA outperforms both random acquisition and BALD on EMNIST, whereas BALD only performs slightly better than random acquisition.
    Right: Both GOIMDA and BALD outperform the random sampling in terms of the accuracy rate at a given acquired dataset size. While GOIMDA and BALD have similar performance, GOIMDA gives slightly higher accuracy rates after the total acquisition size reaches $1,000$. 
    }}
    \label{fig:active_learning}
\end{figure}

Table~\ref{tbl:mnist} and Figure~\ref{fig:active_learning} show that GOIMDA is consistently more sample-efficient than BALD and random acquisition. On MNIST, GOIMDA reaches high accuracy with much fewer labeled examples---often requiring only a small fraction of the labeling budget needed by random sampling---highlighting the value of targeted acquisition. 
The learning curves in Figure~\ref{fig:active_learning} reinforce this trend, with GOIMDA maintaining the highest test accuracy throughout the acquisition process. 
On EMNIST Letters, GOIMDA again achieves the best performance, delivering a clear accuracy advantage over both BALD and random, while BALD provides only marginal improvement over random acquisition. 
On Movie Reviews, GOIMDA and BALD both outperform random sampling at essentially every acquisition step, with GOIMDA matching or slightly exceeding BALD; the tighter uncertainty bands for GOIMDA/BALD also suggest more stable gains than random selection.

\section{Related work}
\label{sec:related}

This work draws on three themes around iterative
data acquisition under expensive evaluation or labeling costs.

\subsection{Iterative data acquisition for optimization}

A standard approach to solving expensive black-box optimization
problems is iterative data acquisition, in which a surrogate model is often
fit to observed evaluations, and an acquisition function is used to
select new inputs~\citep{Booker1999surrogateclass,Regis2007RBFGO,REGIS2007ParallelRBFGO,Garnett2023bayesoptbook}.
The most influential instantiation of this paradigm is \emph{Bayesian
optimization} (BO) \citep{Shahriari2016BOreview,Frazier2018ATO,Wang2023review},
which maintains a probabilistic surrogate $\mathcal M$ of the unknown objective and selects new evaluations by optimizing an acquisition function that balances exploration and exploitation.

Classical BO methods typically rely on Gaussian process (GP) surrogates
\citep{rasmussen2006GPforML} together with acquisition criteria such as
Probability of Improvement (PI) \citep{Torn1989PI,Kushner1964PI}, Expected Improvement (EI) \citep{jones1998EI,Mockus1975EI}, Upper Confidence Bound (UCB) \citep{Srinivas2010UCB}, Entropy Search (ES) \citep{hennig2012entropysearch,Frolich2020NoisyInput_EntropySearch}, Max-value Entropy Search (MES) \citep{wang2017MES}, and Knowledge Gradient (KG) \citep{frazier2008KLdiscrete,frazier2009KLcorrelatednormal,scott2011KGgaussianmodel,Wu2016KG,Wu2019PracticalMB}. 

Despite its empirical success, BO exhibits several practical limitations \citep{Shahriari2016BOreview,Frazier2018ATO,Garnett2023bayesoptbook,Wang2023review}.
First, GP-based surrogates require forming and factorizing an $n \times n$ kernel matrix at each iteration, incurring $\mathcal{O}(n^2)$ memory and $\mathcal{O}(n^3)$ time complexity \citep{rasmussen2006GPforML,Garnett2023bayesoptbook}. This rapidly becomes prohibitive as the number of evaluations grows. To address scalability or high-dimensional settings, prior work has proposed structured or high-dimensional GP variants, low-dimensional embeddings, and localized trust-region methods such as TuRBO \citep{wang2018batchBOhighdim,Oh2018BOCylindricalKernels,Nayebi2019BOEmbeddedSubspaces,Shen2023HDBO,Eriksson2019ScalableLocalBO,Eriksson2021ScalableCB}. While effective in some regimes, these approaches often impose restrictive modeling assumptions, introduce additional posterior-maintenance overhead, and can be sensitive to model miscalibration \citep{Binois2022HDGPBO,Wang2023review}.
Second, replacing GPs with more scalable surrogates—such as tree- or density-based models \citep{Bergstra2011treebasedBO,Hutter2011SMAC} or Bayesian neural networks \citep{Snoek2015scalableBODNN,Springenberg2016BORobustBNN}—can alleviate computational costs, but frequently at the expense of uncertainty calibration or robustness \citep{Binois2022HDGPBO,li2024BNNBOreview}.
Third, many acquisition functions require expectations under the surrogate’s posterior predictive distribution. Fully Bayesian treatments, involving hyperparameter marginalization and exact acquisition computation, are rarely tractable in practice. Consequently, BO typically relies on plug-in hyperparameter estimates \citep{jones1998EI,Eriksson2019ScalableLocalBO} or Monte Carlo approximations \citep{Springenberg2016BORobustBNN,Snoek2015scalableBODNN,Wu2016KG,Wu2019PracticalMB,kim2022deeplearningbayesianoptimization}, which respectively forgo proper marginalization and introduce additional computational overhead and estimator variance \citep{brochu2010tutorialbayesianoptimizationexpensive,Shahriari2016BOreview,Binois2022HDGPBO,li2024BNNBOreview,Garnett2023bayesoptbook}.

Motivated by these challenges, we pursue an alternative that avoids explicit posterior inference altogether. The proposed GOIMDA algorithm scores candidate points by the first-order effect that upweighting them would have on a task-level objective. This yields an exploration-aware yet bias-focused improvement strategy. Under exponential-family models, GOIMDA retains the inverse curvature and variance terms that underlie predictive-entropy-based acquisitions, thereby capturing uncertainty while modulating them through goal alignment and predictive bias to emphasize exploitation. As we show in subsequent sections, GOIMDA recovers the exploitation behavior of Bayesian optimization without maintaining a posterior, offering a practical alternative when Bayesian updates are computationally expensive or poorly calibrated \citep{Shahriari2016BOreview,Wang2023review,Garnett2023bayesoptbook}.

\subsection{Iterative data acquisition for learning}

A closely related paradigm in supervised learning is \emph{active
learning} (AL), which aims to construct accurate predictive models
using as few labeled examples as possible \citep{Settles2009ActiveLL}.
AL addresses supervised learning settings in which labeled data are expensive to obtain and the learner must adaptively decide which inputs to query. A canonical example arises in biological stimulus–response studies, where experimental cost limits the number of probes and careful selection of stimuli is crucial for learning accurate predictive models.

Here, we focus on AL for deep learning (DL) models, which have achieved remarkable success across domains due to their expressive representations and strong function-approximation capabilities. Most deep AL methods have been developed for classification tasks, particularly in image and text domains \citep{ren2021surveyDAL,Schrder2020surveyALforText,Li2025surveyDALrecent}, and are also covered in broader surveys of active learning \citep{Settles2009ActiveLL}.

In pool-based deep AL, an \emph{acquisition function} assigns a utility score to each unlabeled data point, and the learner queries those with the highest scores at each iteration. Existing acquisition strategies for deep neural networks can be broadly categorized into three families \citep{ren2021surveyDAL,Li2025surveyDALrecent}:
(i) \emph{uncertainty-based} methods, which query points about which the model is most uncertain \citep{Houlsby2011Bald,Wang2014LC,Bloodgood2018margin,Li2020margin,wang2016ALImage,Ranganathan2017,gal2017deep,yang2017suggestive,Lakshminarayanan2017ALpredictiveundertainty,beluch2018power,siddhant2018deep,Tran2019BayesianGA,Kim2021LADA,BatchBAL2023,Kirsch2019batchBALD,Xie2022energy};
(ii) \emph{representation-based} methods, which select points that are prototypical or diverse in a learned feature space \citep{Chakraborty2015repAL,sener2018coreset,Zhao2019repAL,Li2020repAL,Hasan2020repAL,Coleman2022repAL,Gudovskiy2020repAL,kim2022coreset,Jin2022repAL,LiS2022repAL,Parvaneh2022repAL}; and
(iii) \emph{hybrid} methods, which combine uncertainty with diversity or representativeness in feature or gradient space \citep{Donmez2007hybridAL,Yin2017hybridAL,Zhdanov2019hybridAL,Sinha2019VariationalAA,Ash2020DBALgradient,Shui2020hybrid,Citovsky2021hybridAL,Gu2021hybridAL,Huang2021hybridAL,Geifman2019hybridAL,Ash2021fishing,saran2023streaming}.
The latter two categories are particularly common in batch-mode acquisition, where the goal is to select a set of informative yet non-redundant points in each AL round.

Among these approaches, \emph{uncertainty-based} criteria remain the most widely used in both classical and deep active learning \citep{Settles2009ActiveLL,Schrder2020surveyALforText}. These methods are typically instantiated via confidence-, margin-, or entropy-based scores \citep{Wang2014LC,Schohn2000margin,Roth2006margin,Bloodgood2018margin,Li2020margin}, Bayesian mutual-information objectives such as BALD and its batch extensions \citep{Houlsby2011Bald,gal2017deep,BatchBAL2023,Kirsch2019batchBALD}, or expected model-change criteria that approximate loss reduction \citep{Settles2007MultipleInstanceAL,Roy2001modelchange,Freytag2001modelchange,Tran2019BayesianGA,xu2019understandinggoalorientedactivelearning}. 
A recent work proposed to score candidate queries by how much they are expected to reduce predictive uncertainty on target inputs, making the acquisition objective directly aligned with downstream predictive performance~\citep{bickfordsmith2023epig}. 
In deep learning, these strategies commonly rely on approximate Bayesian neural networks (e.g., Monte Carlo dropout) \citep{Srivastava2014dropout,Gal2016Dropout,gal2017deep} or deep ensembles \citep{Lakshminarayanan2017ALpredictiveundertainty,beluch2018power}, and have also been combined with discriminative or energy-based scoring functions \citep{beluch2018power,Kim2021LADA,Xie2022energy}.

Despite their empirical success, uncertainty-based deep AL methods face fundamental challenges. Predictive probabilities from modern neural networks are often poorly calibrated, with models exhibiting overconfidence even when incorrect or under distribution shift \citep{guo2017calibration,abdar2021review,Schrder2020surveyALforText}. Moreover, Bayesian neural networks require approximate inference procedures that are computationally expensive and can be unreliable in the small-data regimes typical of early AL rounds \citep{arbel2023primerBNN}. As highlighted in recent surveys, obtaining reliable and well-calibrated uncertainty estimates in deep models remains an open problem, which directly limits the robustness of uncertainty-driven deep active learning \citep{Schrder2020surveyALforText,ren2021surveyDAL,Li2025surveyDALrecent}.

In contrast, GOIMDA avoids explicit posterior inference altogether while remaining uncertainty-aware. We build on influence functions as an alternative basis for data acquisition. Rather than explicitly modeling predictive uncertainty, influence-based methods estimate how upweighting a candidate point would affect the trained model or a downstream evaluation objective, without retraining from scratch \citep{koh2017understanding}. By using influence scores for query selection, GOIMDA avoids reliance on calibrated uncertainty estimates while still implicitly capturing aspects of uncertainty and prediction bias. As we show in Section~\ref{sec:exponential}, under an exponential-family assumption, GOIMDA admits a connection to predictive entropy minimization: it retains uncertainty-sensitive inverse curvature terms while explicitly incorporating a prediction-bias component, resulting in an exploration-aware yet exploitation-focused acquisition rule.

\subsection{Influence-function-based algorithms}

Influence functions (IFs) originate in robust statistics as a principled tool for quantifying the infinitesimal effect of perturbing a data point on an estimator. Seminal work by Cook and colleagues developed a comprehensive diagnostic framework for identifying influential observations and assessing local influence in regression models, encompassing single-point and grouped perturbations, residual-based diagnostics, and empirical characterizations of influence \citep{cook1977detection,cook1980characterizations,cook1986assessment}.

These ideas were later revived in the deep-learning literature to analyze complex, nonconvex models. In particular, \citep{koh2017understanding} demonstrated that first-order IFs can approximate the effect of upweighting or removing a training example on a model’s predictions, enabling practical tools for debugging, dataset curation, and attributing test errors to individual training points. Subsequent work extended IFs to group-level effects \citep{koh2019accuracy}, scalable approximations \citep{guo2020fastif}, higher-order and group influence \citep{basu2020influence,basu2020second}, and uncertainty quantification via higher-order IF and jackknife-style estimators \citep{alaa2020discriminative}. More broadly, IF-based analyses have been used to uncover spurious data artifacts \citep{han2020explaining}, characterize memorization and long-tail behavior \citep{feldman2020neural}, and study representational bias in neural networks \citep{brunet2019understanding}.

Most existing applications of IFs in deep learning are retrospective: given a fixed, labeled training set—and often fixed test points—IFs are used to explain or diagnose existing predictions. In contrast, iterative data acquisition presents a fundamentally different setting. At selection time, the label of a candidate point is \emph{unknown}, and acquisition must therefore be based on the \emph{expected} influence of the candidate under a predictive distribution $p_0(y \mid x)$.

Prior work that leverages influence functions for iterative data acquisition has largely focused on active learning. An initial line of work proposes selecting data points based on their influence with respect to predefined utility functions \citep{xu2019understandinggoalorientedactivelearning,Liu2021InfluenceSelection,Xia2023RALIF}. These utility functions encode the learning ``goal,'' such as the training loss on an auxiliary labeled test set or predictive entropy computed from the current model. When test labels are unavailable, or when the candidate’s label is unknown, these methods rely on approximations using the current model $p_\theta$, in the same spirit as standard uncertainty-based active learning. A related approach \citep{wang2022boostingactivelearningimproving} avoids directly evaluating test loss by deriving an upper bound and selecting points based on the gradient norm of the candidate loss. However, terms involving unknown candidate outputs are again approximated using the current trained model alone.

Different from these works, we study the influence of a candidate data point on a \emph{goal function} that is flexibly defined to cover a broad range of tasks. These include, for example, iterative optimization of black-box functions and label acquisition to improve performance on an unlabeled test set in active learning. Rather than approximating unknown quantities solely through the current trained model, we estimate them using a separate ensemble constructed via resampling techniques such as the jackknife. This decouples acquisition from a single potentially miscalibrated predictor and enables influence-based selection that remains effective when posterior uncertainty is expensive or unreliable to estimate.

\section{Discussion}
\label{sec:discussion}
In this work, we introduce \emph{Goal-Oriented Influence-Based Data
Acquisition} (GOIMDA), a flexible iterative data acquisition algorithm
that can be adapted to deep learning models over a wide range of tasks
through user-defined, goal-oriented objective functions. By explicitly
optimizing downstream objectives, GOIMDA enables active data
acquisition to substantially reduce the number of required data
points, leading to more efficient and cost-effective learning and
optimization.

Several directions remain for future work. First, GOIMDA currently
follows a fully sequential acquisition strategy, selecting a single
data point at each iteration and updating the model immediately. While
this approach maximizes data efficiency, it can increase overall
training time. A natural acceleration strategy is to acquire batches
of the top $b$ points with the lowest individual influence scores.
However, independent selection within a batch can reduce test
performance due to redundancy and correlation among the acquired
points, a limitation also observed in batch active learning
\citep{gal2017deep}. An important extension is therefore to account
for interactions among candidate points---such as mutual information---in
order to identify maximally influential acquisition batches while
preserving computational efficiency.

Second, GOIMDA relies on supervised learning models as surrogates,
using observed data to calibrate the current state of knowledge and to
estimate both candidate responses and their impact on the target
objective. Its effectiveness thus depends on the accuracy of these
predictions for unseen data. Given the abundance of unlabeled
information typically available in both candidate and target sets,
integrating semi-supervised learning techniques may better capture the
underlying structure of the feature space and further improve
predictive performance. We leave the development of such
semi-supervised extensions of GOIMDA to future work.

\vspace{20pt}

\textbf{Acknowledgements. } 
This work was supported in part by funding from the Office of Naval Research under grant N00014-23-1-2590, the National Science Foundation under grant No. 2310831, No. 2428059, No. 2435696, No. 2440954, a Michigan Institute for Data Science Propelling Original Data Science (PODS) grant, LG Management Development Institute AI Research, and Two Sigma Investments LP. Any opinions, findings, and conclusions or recommendations expressed in this material are those of the authors and do not necessarily reflect the views of the sponsors.
\clearpage

\bibliography{reference}

\clearpage
\appendix
 
\begin{center}
\textbf{\Large Supplementary Material}
\end{center}
\section{Deriving the Jacobian of minimizer w.r.t. parameters}\label{sec:appendix_partial_x}
By definition of $\hat{x}^\ast_\theta=:\hat{x}_{\min}(\theta)$, the first-order condition is
\begin{align}
0
&=\nabla_{x}\,\mathbb E_{y\sim p_\theta(\cdot\mid x)}[y\mid x]\Big\vert_{x=\hat{x}_{\min}(\theta)}
=\nabla_{x}\,A'\!\big(\eta_\theta(x)\big)\Big\vert_{x=\hat{x}_{\min}(\theta)}.
\label{eq:x_gradient_optimization}
\end{align}
Define
\[
g(\theta,x):=\nabla_x A'\!\big(\eta_\theta(x)\big)\in\mathbb R^{d_x}.
\]
Then $g(\theta,\hat x_{\min}(\theta))=0$. Fix $\hat\theta_t$ and perturb $\theta=\hat\theta_t+\delta$ with
\[
\Delta x:=\hat x_{\min}(\hat\theta_t+\delta)-\hat x_{\min}(\hat\theta_t).
\]
A first-order Taylor expansion gives
\begin{align*}
0
&=g(\hat\theta_t+\delta,\hat x_{\min}(\hat\theta_t+\delta))\\
&\approx g(\hat\theta_t,\hat x_{\min}(\hat\theta_t))
+\Big[\frac{\partial}{\partial\theta}g(\theta,x)\Big]_{\theta=\hat\theta_t,\;x=\hat x_{\min}(\hat\theta_t)}\,\delta
+\Big[\frac{\partial}{\partial x}g(\theta,x)\Big]_{\theta=\hat\theta_t,\;x=\hat x_{\min}(\hat\theta_t)}\,\Delta x .
\end{align*}
Since $g(\hat\theta_t,\hat x_{\min}(\hat\theta_t))=0$, we obtain
\begin{align*}
\Big[\frac{\partial}{\partial x}g(\theta,x)\Big]\Delta x
&=-
\Big[\frac{\partial}{\partial\theta}g(\theta,x)\Big]\delta,
\end{align*}
hence
\begin{align*}
\Delta x
&=
-\Big[\frac{\partial}{\partial x}g(\theta,x)\Big]^{-1}
\Big[\frac{\partial}{\partial\theta}g(\theta,x)\Big]\delta
\Big|_{\theta=\hat\theta_t,\;x=\hat x_{\min}(\hat\theta_t)}.
\end{align*}
Noting that $\frac{\partial}{\partial x}g(\theta,x)=\nabla_x^2 A'(\eta_\theta(x))$, this becomes
\begin{align*}
\hat{x}_{\min}(\hat\theta_t+\delta)-\hat{x}_{\min}(\hat\theta_t)
=
-\Big[\nabla_x^2 A'\!\big(\eta_\theta(x)\big)\Big]^{-1}
\Big[\frac{\partial}{\partial\theta}\nabla_x A'\!\big(\eta_\theta(x)\big)\Big]\delta
\Big|_{\theta=\hat\theta_t,\;x=\hat x_{\min}(\hat\theta_t)}.
\end{align*}
Sending $\delta\to 0$ yields the Jacobian (total derivative)
\begin{align*}
\frac{\partial \hat{x}_{\min}(\theta)}{\partial \theta}
=
-\Big[\nabla_x^2 A'\!\big(\eta_\theta(x)\big)\Big]^{-1}
\Big[\frac{\partial}{\partial\theta}\nabla_x A'\!\big(\eta_\theta(x)\big)\Big]
\Big|_{x=\hat x_{\min}(\theta)}.
\end{align*}

\section{Experiment datasets}
\subsection{Noisy black-box function} \label{appendix:noisy_black_box}
\textbf{2D Branin.} The first function is the Branin function $f(\mathbf{x}) = a\big[(x_2-bx_1^2+cx_1-r)^2+s(1-t)\cos(x_1)-q\big]$ where $a=1 / 51.95 $, $b=5.1/(4\pi)^2$, $c=5/\pi$, $r=6$, $s=10$, $t=1/(8\pi)$ and $q=44.81$, a rescaled form by \citep{Picheny2013opt_fun}. The function is evaluated on the square $x_1\in[0.,1.]$, $x_2\in[0.,1.]$. It has three global minima $f(\mathbf{x}^\ast)=-1.0474$.

\textbf{2D Drop-Wave function.} This two-dimensional, radially symmetric test function is highly multimodal due to its oscillatory cosine term. 
For $\mathbf x=(x_1,x_2)$ it is defined as
\[
f(\mathbf x)\;=\;-\frac{1+\cos\!\big(12\,\sqrt{x_1^2+x_2^2}\big)}{\,0.5\,(x_1^2+x_2^2)+2\,},
\]
and is typically evaluated on $x_i\in[-5.12,5.12]$. 
It attains the global minimum $f(\mathbf x^\ast)=-1$ at $\mathbf x^\ast=(0,0)$.

\textbf{5D Ackley function.} The Ackley function is generally expressed in the form $f(x)= -a\exp\big(-b\sqrt{\frac{1}{d}\sum_i x_i^2}\big) - \exp\big(\frac{1}{d}\sum_i \cos(cx_i)\big)+a+\exp(1)$, where $a = 20$, $b = 0.2$, $c = 2\pi$ and $\sigma = 0.5$. 
For the \textit{dimension} $d=2$, the function leads some optimization algorithms, particularly hill-climbing algorithms, to be trapped in one of its many local minima. The function is evaluated on the hypercube $x_i\in[-5,5]$, for $i=1,\ldots,d$. We set the dimension $d=5$ for higher dimensional analysis. The \textit{Global minimum}~$f(\mathbf{x}^\ast)=0$ is achieved at $\mathbf{x}^\ast = (0,\ldots,0)$.
\subsection{Hyperparameter tuning under distribution shift}\label{appendix:hyperparameter_tuning}

We use CIFAR-10 \citep{CIFAR10DATASET}, which contains $50,000$ training images and 10,000 test images across $10$ classes. The predictive model is a Pre-Activation Residual Network \citep{He2016IdentityMI}. We construct a target (unlabeled) evaluation set by selecting $3,000$ CIFAR-10 test images whose (unknown-to-the-learner) labels belong to a restricted class set $C:= \{C_1, C_2, C_3\}$ ($1,000$ images per class). We also construct a labeled source dataset $(X,Y)$ that is deliberately imbalanced: it contains $500$ labeled instances from each class in $C$, and $5,000$ labeled images from each of the remaining classes in $Y\setminus C$. Hyperparameter acquisition methods are evaluated by their impact on performance on the target subset, while training leverages the labeled source data.

\subsection{Predictive learning} \label{appendix:predictive_learning}
\subsubsection{MNIST active learning benchmark} \label{appendix:mnist}
MNIST \citep{mnist} contains $60,000$ training images and $10,000$ test images over 10 digit classes. Following \citep{gal2017deep}, we initialize the active learning loop with a random but class-balanced labeled set of 20 points (two per class). We hold out $1,024$ training points as a validation set; the remaining training points form the acquisition pool. Test accuracy is reported on the standard MNIST test set after each acquisition step.

\subsubsection{EMNIST Letters active learning}\label{appendix:emnist}
EMNIST Letters \citep{cohen2017emnist} is a balanced 26-class character recognition task with $124,800$ training images and $20,800$ test images. We set aside the last portion of the training set---equal in size to the test set---as a validation set, and use the remaining training points as the acquisition pool. We follow the same pool-based acquisition loop as in MNIST, starting from a small randomly initialized labeled set.

\subsubsection{Rotten Tomatoes movie-review phrases} \label{appendix:movie}
We evaluate binary sentiment classification using the Rotten Tomatoes phrase dataset originally collected by Pang and Lee \citep{movie2005}. The original labels are ordinal with five levels (``negative'', ``somewhat negative'', ``neutral'', ``somewhat positive'', ``positive''). To construct a binary task, we remove neutral phrases, map \{``negative'', ``somewhat negative''\} to $0$ and \{``somewhat positive'', ``positive''\} to $1$, and then randomly split the resulting dataset into $59,070$ training phrases, $1,024$ validation phrases, and $16,384$ test phrases. For text preprocessing, we use a bag-of-words representation; we discard words with total occurrence less than $10$, reducing the feature dimension from $15,186$ to $7,004$.

\end{document}